\definecolor{LightGray}{gray}{1}
\newcommandx{\unsure}[2][1=]{\todo[linecolor=red,backgroundcolor=red!25,bordercolor=red,#1]{#2}}
\newcommandx{\change}[2][1=]{\todo[linecolor=blue,backgroundcolor=blue!25,bordercolor=blue,#1]{#2}}
\newcommandx{\info}[2][1=]{\todo[linecolor=OliveGreen,backgroundcolor=OliveGreen!25,bordercolor=OliveGreen,#1]{#2}}
\newcommandx{\improvement}[2][1=]{\todo[linecolor=Plum,backgroundcolor=Plum!25,bordercolor=Plum,#1]{#2}}
\newcommandx{\thiswillnotshow}[2][1=]{\todo[disable,#1]{#2}}
\newcommand{\sq}[1]{\textcolor{#1}{\rule{1.5ex}{1.5ex}}}
\newcommand{\sqfull}{\sq{taborange!70}}
\newcommand{\sqknots}{\sq{tabblue!90}}
\newcommand{\sqcore}{\sq{tabgreen!70}}
\newtheorem*{lemma}{Lemma}
\theoremstyle{definition}
\newtheorem{definition}{Definition}
\theoremstyle{definition}
\definecolor{lightgray}{gray}{0.95}
\definecolor{midgray}{gray}{0.55}
\definecolor{steelblue}{HTML}{4D82B7}
\definecolor{davysgrey}{rgb}{0.33, 0.33, 0.33}
\definecolor{LightCyan}{rgb}{0.88,1,1}
\definecolor{LightGold}{HTML}{F3E2C5}
\definecolor{AngelRow}{HTML}{FFFDD0}
\definecolor{ao(english)}{rgb}{0.0, 0.5, 0.0}
\definecolor{lightsalmon}{rgb}{1.0, 0.63, 0.48}
\definecolor{tabgreen}{HTML}{2CA02C}
\definecolor{tabblue}{HTML}{1F77B4}
\definecolor{taborange}{HTML}{FF7F0E}
\newcommand{\ourcolor}{tabgreen!15}
\newcommand{\deltatime}[1]{\textcolor{gray}{\textbf{\small #1}}}
\newcommand{\deltaneg}[1]{\textcolor{red}{\scriptsize{(#1)}}}
\newcommand{\deltapos}[1]{\textcolor{green!60!black}{\scriptsize{(#1)}}}
\newcommand{\deltazero}[0]{\textcolor{black}{\scriptsize{(+0.00)}}}
\newcommand{\vref}{V_{A}^{\mathrm{ref}}}
\newcommand{\uref}{U_{B}^{\mathrm{ref}}}
\DeclareMathOperator*{\argmin}{arg\,min}
\newcommand{\Star}[1]{#1\ensuremath{^*}\kern-\scriptspace}
\newcommand{\tit}[1]{\smallbreak\noindent\textbf{#1 }}
\newcommand{\tinytit}[1]{\noindent\textbf{#1}}
\crefname{section}{Sec.}{Secs.}
\crefname{table}{Tab.}{Tabs.}
\crefname{figure}{Fig.}{Figs.}
\newcommand{\ourrow}{\rowcolor{\ourcolor}[\dimexpr\tabcolsep+0.1pt\relax]}
\newcommand{\iso}{{\textcolor{gray}{\scriptsize\textbf{+Iso-C}}}}
\newcommand{\rightalign}[1]{\multicolumn{1}{r}{#1}}
\DeclareRobustCommand\onedot{\futurelet\@let@token\@onedot}
\def\@onedot{\ifx\@let@token.\else.\null\fi\xspace}
\def\eg{\emph{e.g}\onedot} 
\def\ie{\emph{i.e}\onedot}
\def\wrt{w.r.t\onedot} 
\newcommand{\PreserveBackslash}[1]{\let\temp=\\#1\let\\=\temp}
\newcolumntype{C}[1]{>{\PreserveBackslash\centering}p{#1}}
\newcolumntype{R}[1]{>{\PreserveBackslash\raggedleft}p{#1}}
\newcolumntype{L}[1]{>{\PreserveBackslash\raggedright}p{#1}}
\title{Accurate and Efficient Low-Rank \\ Model Merging in Core Space}
\author{%
Aniello Panariello$^{1}$\thanks{Equal Contribution. \texttt{aniello.panariello@unimore.it, daniel.marczak.dokt@pw.edu.pl}} \quad
Daniel Marczak$^{2,3\ *}$ \\[0.2cm]
\textbf{Simone Magistri}$^{4}$ \quad
\textbf{Angelo Porrello}$^{1}$ \quad
\textbf{Bart{\l}omiej Twardowski}$^{5,6}$ \\[0.2cm]
\textbf{Andrew D. Bagdanov}$^{4}$ \quad
\textbf{Simone Calderara}$^{1}$ \quad
\textbf{Joost van de Weijer}$^{6}$ \\ \\
$^{1}$AImageLab, University of Modena and Reggio Emilia, Italy \\
$^{2}$Warsaw University of Technology, Poland \quad
$^{3}$IDEAS NCBR, Warsaw, Poland \\
$^{4}$Media Integration and Communication Center (MICC), University of Florence, Italy \\
$^{5}$IDEAS Research Institute, Warsaw, Poland \\
$^{6}$Computer Vision Center, Universitat Aut\`{o}noma de Barcelona, Spain \\
}
\begin{document}

\maketitle

\begin{abstract}
In this paper, we address the challenges associated with merging low-rank adaptations of large neural networks. With the rise of parameter-efficient adaptation techniques, such as Low-Rank Adaptation (LoRA), model fine-tuning has become more accessible. While fine-tuning models with LoRA is highly efficient, existing merging methods often sacrifice this efficiency by merging fully-sized weight matrices. We propose the \textit{Core Space} merging framework, which enables the merging of LoRA-adapted models within a common alignment basis, thereby preserving the efficiency of low-rank adaptation while substantially improving accuracy across tasks. We further provide a formal proof that projection into Core Space ensures no loss of information and provide a complexity analysis showing the efficiency gains. Extensive empirical results demonstrate that Core Space significantly improves existing merging techniques and achieves state-of-the-art results on both vision and language tasks while utilizing a fraction of the computational resources. Codebase is available at \url{https://github.com/apanariello4/core-space-merging}.
\end{abstract}
\section{Introduction}

In recent years, the size of neural networks has grown substantially~\cite{falcon, brown2020language, deepseek-ai2024deepseekv3, grattafiori2024llama, qwen2.5}, increasing the economic and computational costs associated with training from scratch and fine-tuning. As a consequence, efficient low-rank adaptation techniques have emerged, which enable broader access to these powerful models~\cite{houlsby2019parameter, hu2021lora, kopiczko2023vera, yang2024dora, zhang2023adalora}. Techniques like Low-Rank Adaptation (LoRA)~\cite{hu2021lora} reparameterize model updates to significantly reduce the number of trainable parameters. This makes it feasible for a broader range of users to fine-tune large architectures on their specific tasks.

At the same time, the advent of model hubs such as Hugging Face~\cite{huggingface} has simplified the diffusion of pre-trained and fine-tuned models, opening new opportunities for collaborative and multi-task learning by allowing users to acquire and build upon existing models easily. In this context, model merging, which aims to combine multiple specialized models into one capable of handling various tasks, has been gaining interest~\cite{matena2021merging, rame2022diverse, wortsman2022model, mancusi2024is, pmlr-v274-soutif25a}. However, most prior works focus on fully fine-tuned models~\cite{choi2024revisiting,tsv, ilharco2023task,marczak2025task,ortizjimenez2023tangent,wang2024localizing,yadav2023tiesmerging,rinaldi2025update}. While this is practical for smaller architectures, fully fine-tuned versions of larger models are rare due to their high memory and compute costs. As model sizes grow, new strategies are needed to efficiently merge fine-tuned adaptations without incurring the prohibitive overhead of merging full models.

In~\cite{stoica2024knots}, the authors observe that directly applying existing merging techniques~\cite{ilharco2023task,yadav2023tiesmerging,yu2024language} to updates derived by multiplying low-rank components leads to suboptimal results. To address this, they introduce an alignment space that improves update compatibility. However, merging in this alignment space requires abandoning the low-rank representation and performing a singular value decomposition (SVD) on the horizontally concatenated full space updates. This suffers from two significant drawbacks: it eliminates the efficiency benefits of low-rank adaptation, and becomes prohibitively expensive as the base model size increases.
\begin{wrapfigure}{r}{0.51\textwidth}
    \centering

    \includegraphics[width=0.5\textwidth]{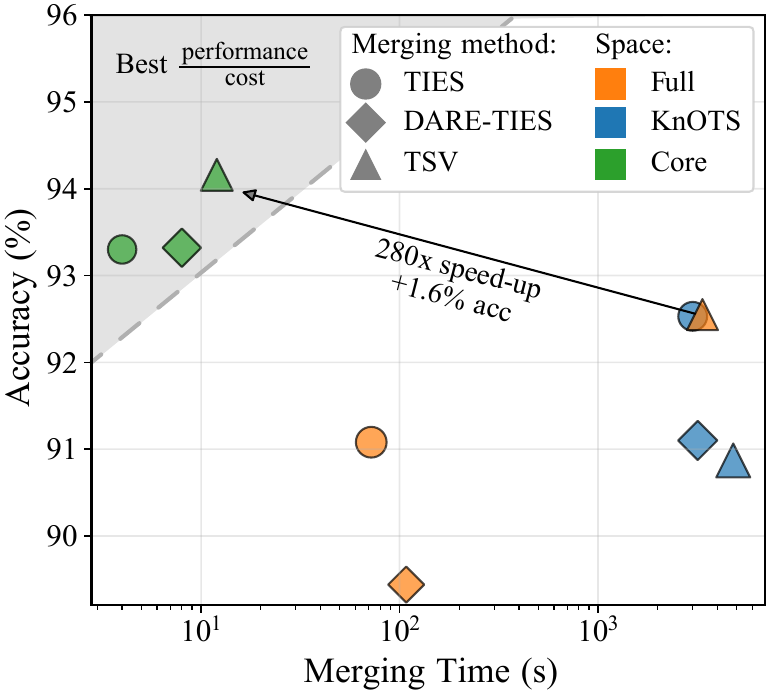}
    \caption{
    Merging in full space is fast but suboptimal (bottom center). Merging in KnOTS space or using strong merging methods (\eg, TSV) improves performance but increases cost by orders of magnitude (right). \textbf{Core Space merging is effective and efficient (top left).} Results on Llama 3 8B.
    }\label{fig:teaser}
    \vspace{-0.1in}
    \end{wrapfigure}

To overcome such limitations, we propose \textit{Core Space}, a novel parameter-efficient subspace that supports arbitrary merging techniques while retaining the benefits of low-rank adaptation. Core Space provides a common alignment basis for all task-specific low-rank components without loss of information. Notably, its dimensionality depends solely on the number of tasks and the LoRA rank, remaining tractable regardless of the base model size. Beyond its advantages in terms of efficiency, merging in Core Space also consistently improves the performance of existing merging strategies. We evaluate three setups: (1) \sqfull{} first multiplying low-rank matrices and then applying merging techniques, (2) \sqknots{} merging in the KnOTS space~\cite{stoica2024knots}, and (3) \sqcore{} merging in our proposed Core Space. Across both vision and language domains, merging in Core Space achieves the best results -- demonstrated on ViT-B/32, ViT-L/14, and Llama 3 8B backbones -- highlighting that our approach not only preserves parameter efficiency but also leads to improved generalization and task performance (see \cref{fig:teaser}).

The main contributions of this paper are the following:
\begin{itemize}[itemsep=.45em]
    \item We introduce \textit{Core Space Merging}, a framework to merge LoRA-adapted models in a shared low-rank basis, avoiding costly full space operations, while improving accuracy. Our approach can be easily integrated with existing merging methods.
    \item We prove that projection into Core Space ensures \textbf{no loss of information}, and provide a complexity analysis demonstrating the \textbf{efficiency gain} of merging in the proposed space.
    \item We present an extensive empirical evaluation showing \textbf{state-of-the-art results} achieved at a \textbf{fraction of the computational cost} of competing methods by merging in Core Space for vision and language tasks, including experiments on ViT-B/32, ViT-L/14, and Llama 3 8B.
\end{itemize}

\section{Related Work}

\tit{Parameter-efficient fine-tuning (PEFT).}
Pre-trained models serve as a starting point for training experts specialized in various downstream tasks~\cite{devlin2019bert,radford2021learning}. As the size of frontier models grows, the cost of fully fine-tuning such models increases accordingly. Therefore, several parameter-efficient fine-tuning (PEFT), updating a small fraction of parameters, have been proposed, including adapters~\cite{houlsby2019parameter}, prefix tuning~\cite{li2021prefix}, and prompt tuning~\cite{lester2021power}. Nowadays, LoRA~\cite{hu2021lora} and its variants~\cite{kopiczko2023vera,yang2024dora}, which rely on low-rank updates, have emerged as one of the most popular PEFT techniques.

\tit{Model merging.} The abundance of available expert models inspired the fundamental question behind model merging~\cite{matena2021merging}: \textit{how can we integrate knowledge from multiple expert models into a single multi-task model?} Task Arithmetic~\cite{ilharco2023task} proposed to construct \textit{task vectors} (\ie, the parameter difference between expert and base model) and aggregate them via scaled addition, creating a multi-task expert. Since a significant performance gap between the single-task and the merged models remains, many approaches were proposed to address this issue~\cite{Akiba2025evolutionary, jin2023regmean, marczak2024magmax, marouf2023weighted, tam2023merging, yang2024model, yang2024adamerging,panariello2025modular}. TIES-Merging~\cite{yadav2023tiesmerging} focuses on reducing sign conflicts between the parameters of expert models. Model Breadcrumbs~\cite{davari2023model} removes outliers from the task vectors, while Consensus Merging~\cite{wang2024localizing} eliminates catastrophic and selfish weights. Most recent methods, like TSV~\cite{tsv} and Iso-C~\cite{marczak2025task}, rely on singular value decomposition (SVD) of weight update matrices to reduce task interference when merging models. However, most methods are designed for merging fully fine-tuned models.

\tit{Merging LoRA-adapted models.}
Methods designed to merge fully fine-tuned models do not necessarily transfer well to merging LoRA-adapted models~\cite{tang2023parameter}. The authors of~\cite{tang2023parameter} proposed and improved a method to merge LoRAs. However, their approach relies on altering the fine-tuning procedure. KnOTS~\cite{stoica2024knots} proposes to merge LoRA updates in the shared subspace, achieving a significant improvement. However, KnOTS performs SVD on the concatenation of full-size matrices instead of leveraging their decomposed update representations, making it costly, especially for large weight matrices. Therefore, finding a method that effectively and efficiently merges LoRA-adapted models remains an open challenge.
\section{Preliminaries}
\tit{LoRA fine-tuning.} Low-Rank Adaptation (LoRA)~\cite{hu2021lora} is a technique for efficient fine-tuning of large pre-trained models. Instead of updating the full model weights $W \in \mathbb{R}^{m \times n}$, LoRA introduces two learnable matrices $A \in \mathbb{R}^{r \times n}$ and $B \in \mathbb{R}^{m \times r}$ (where $r \ll \min(m, n)$), and modifies the weight update as $W = W_0 + \Delta W = W_0 + BA$, where $W_0$ is the original weight matrix. This significantly reduces the number of parameters that need to be updated during fine-tuning.

\tit{Model merging.}
 Given a set of $T$ parameters $\{W_1, \dots, W_T\}$, for a common architecture trained on $T$ different tasks, a basic model merging approach calculates task vectors $\Delta W_i = W_i - W_0$, and computes a weighted sum $W_{\text{merged}} = W_0 + \alpha \sum_{i=1}^T \Delta W_i$. When dealing with LoRA-adapted models we obtain a common $W_0$ and a set of decomposed, low-rank updates ${\{\Delta W_i = B_i A_i\}}_{i=1}^{T}$. However, merging such weight matrices obtained from LoRA leads to suboptimal performance as shown in~\cite{stoica2024knots}, since LoRA-adapted models are less aligned \wrt to their fully fine-tuned counterparts.

\section{The Core Space Merging Framework}\label{sec:method}
In this section, we introduce \textit{Core Space Merging} (see \cref{fig:method}), a framework designed to identify an effective and efficient subspace -- referred to as the \textit{Core Space} -- in which model merging for LoRA-adapted models can be performed while remaining in the low-rank regime. Core Space is designed to be reversible -- it ensures no loss of information when projecting into Core Space and back to the original space -- while being as compact as possible. Compactness allows for the use of state-of-the-art merging methods relying on Singular Value Decomposition (SVD) of weight matrices, which are highly costly to perform in the original space for large models.
\begin{figure}
    \centering
    \includegraphics[width=\linewidth]{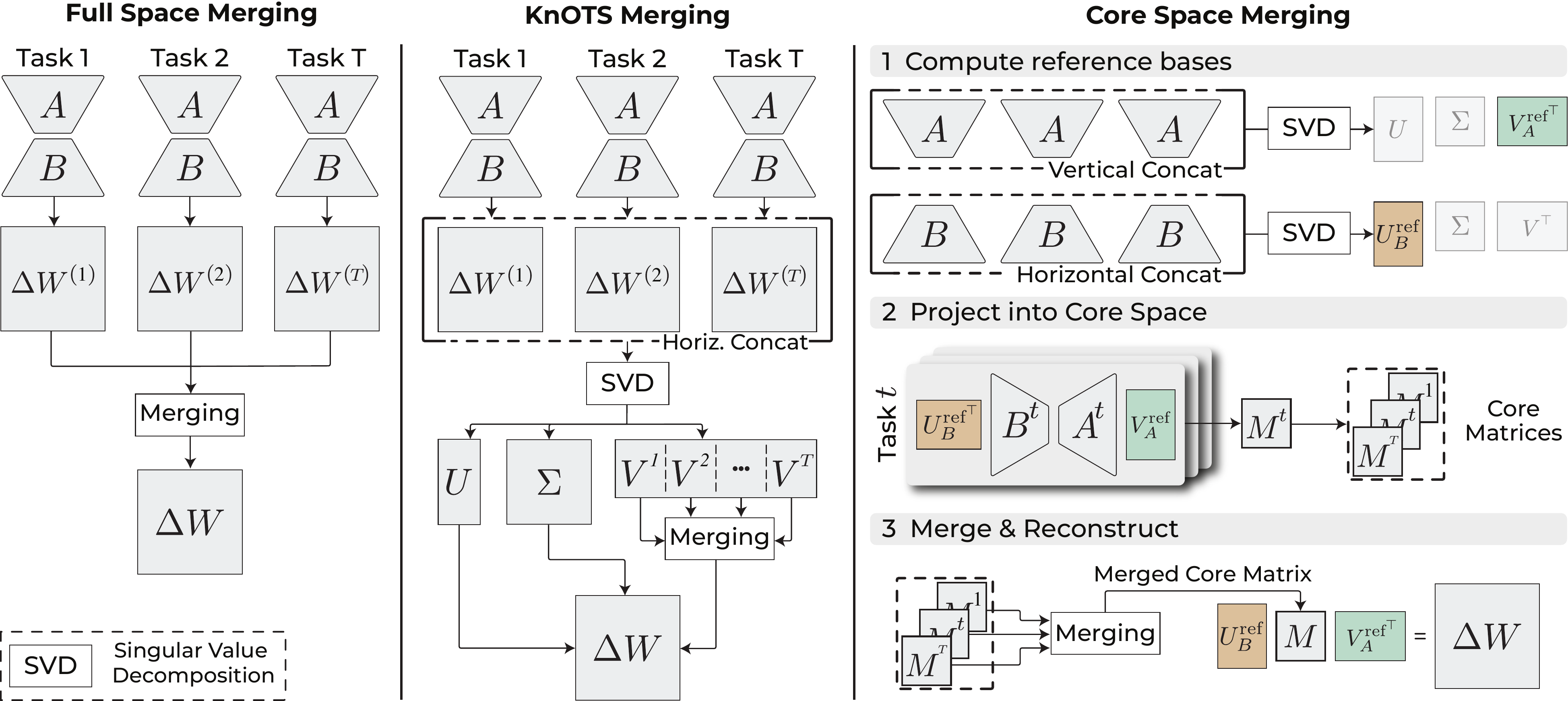}
    \caption{\textbf{Full Space Merging} (left) firstly reconstructs full space matrices $\Delta W^{(t)} = B^{(t)} A^{(t)}$, and then performs merging in the full space to obtain $\Delta W$. \textbf{KnOTS Merging} concatenates the $\Delta W^{(t)}$ matrices, and performs a costly SVD on this high-dimensional matrix. Then, the $V^{(t)}$ matrices are merged and used to obtain the final $\Delta W$. The proposed \textbf{Core Space Merging} (right) performs SVD on a concatenation of low-dimensional $A^{(t)}$ and $B^{(t)}$ matrices to obtain reference bases $(\vref, \uref)$. Afterwards, it projects each update into the Core Space to obtain the core matrices ${\{M^{(t)}\}}_{t=1}^T$. It then performs merging in the Core Space and reconstructs to obtain the final $\Delta W$.}\label{fig:method}
\end{figure}

\subsection{Model Merging in Core Space}\label{sec:core-space}
Let $A^{(t)} \in \mathbb{R}^{r \times n}$ and $B^{(t)} \in \mathbb{R}^{m \times r}$ denote the low-rank matrices for task $t$, derived from a shared pre-trained base model $W_0$. Each task update $\Delta W^{(t)} = B^{(t)} A^{(t)}$ can be reconstructed from the SVD of the matrices:
\begin{equation}
\label{lora-def}
\begin{gathered}
    A^{(t)} = U_{A}^{(t)} \Sigma_{A}^{(t)} V_{A}^{(t)\top}, \quad
    B^{(t)} = U_{B}^{(t)} \Sigma_{B}^{(t)} V_{B}^{(t)\top}, \\[.3em]
    \Delta W^{(t)} = U_{B}^{(t)} \Sigma_{B}^{(t)} V_{B}^{(t)\top}
                     U_{A}^{(t)} \Sigma_{A}^{(t)} V_{A}^{(t)\top},
\end{gathered}
\end{equation}
where the shapes of the matrices in the decomposition are: $ U_{A}^{(t)} \in \mathbb{R}^{r \times r}$, $\Sigma_{A}^{(t)} \in \mathbb{R}^{r \times r} $, $ V_{A}^{(t)} \in \mathbb{R}^{n \times r}$, $U_{B}^{(t)} \in \mathbb{R}^{m \times r}$, $\Sigma_{B}^{(t)} \in \mathbb{R}^{r \times r}$, and $V_{B}^{(t)} \in \mathbb{R}^{r \times r}$.

\tit{Motivation.} Under the hypothesis that all tasks share approximately the same common bases $(U_B, V_A)$ such that $\forall t \in \{1, \dots, T\}, \; U_B \approx U_{B}^{(t)}$ and $V_A \approx V_{A}^{(t)}$, we have:
\begin{equation}\label{eq:naive-merge}
     \Delta W = \sum_{t=1}^{T} \Delta W^{(t)} =\sum_{t=1}^{T} B^{(t)}A^{(t)}  \approx U_B \left( \sum_{t=1}^{T} \Sigma_{B}^{(t)} V_{B}^{(t)\top} U_{A}^{(t)} \Sigma_{A}^{(t)} \right) V_A^\top,
\end{equation}
where $\Sigma_{B}^{(t)} V_{B}^{(t)\top} U_{A}^{(t)} \Sigma_{A}^{(t)} \in \mathbb{R}^{r \times r}$ encodes the directional transformation applied by the low-rank update of task $t$. This suggests that, under aligned bases, the sum of low-rank updates (\ie, Task Arithmetic~\cite{ilharco2023task}) can be reduced to merging operations in a much smaller $r \times r$ space.

\tit{Projecting into the Core Space.} In practice, task-specific bases are not aligned, making direct merging of core matrices as in~\cref{eq:naive-merge} infeasible.
Therefore, we aim to find a shared basis that can represent all tasks without loss of information and that enables merging to be performed in a reduced space compared to the full space $\Delta W^{(t)}$. Intuitively, such a shared basis should span the subspace formed by the union of the individual task subspaces.

\begin{definition}[Reference Bases]
    Given a collection of low-rank matrices ${\{A^{(t)}, B^{(t)}\}}_{t=1}^T$, we define as \emph{reference bases} the orthonormal matrices $\uref\in \mathbb{R}^{m \times Tr}$ and $\vref\in \mathbb{R}^{n \times Tr}$, obtained by performing SVD over the vertically stacked $A^{(t)}$ and horizontally stacked $B^{(t)}$ low-rank matrices across tasks:
\begin{gather}
\label{eq:reference}
    [B^{(1)}; \ldots; B^{(T)}]= \uref \Sigma_{B} V_{B}^\top; \quad
    \begin{bmatrix}
    A^{(1)} \\
    \vdots \\
    A^{(T)}
    \end{bmatrix} = U_{A} \Sigma_{A} \left( \vref \right)^\top.
\end{gather}
These bases span a shared latent subspace into which all task-specific updates are projected.
\end{definition}
Although the reference bases $(\uref, \vref)$ span all task-specific directions, each task $t$ is originally expressed in its own local bases $(U_B^{(t)}, V_A^{(t)})$. To express each update in the common coordinate system for merging, we solve the following least-squares problems:
\begin{equation}
\label{eq:least-squares}
\begin{gathered}
R_B^{(t)} = \argmin_{R \in \mathbb{R}^{T \cdot r \times r}} \left\| \uref R -  U_B^{(t)} \right\|_F^2, \quad Q_A^{(t)} = \argmin_{Q \in \mathbb{R}^{T \cdot r \times  r}} \left\|  {\vref} Q - {V_A^{(t)}} \right\|_F^2,
\end{gathered}
\end{equation}
where $V_A^{(t)} \in \mathbb{R}^{n \times r}$ and $\vref \in \mathbb{R}^{n \times T \cdot r}$ (and similarly for $U_B^{(t)}$ and $\uref$). These problems are convex, and since $\uref$ and $\vref$ are orthonormal, setting the gradients to zero yields the global minimizers (see \cref{appendix-sec:least-squares} for the full derivation):
\begin{equation}
  R_B^{(t)} = {\uref}^\top U_B^{(t)}, \quad       Q^{(t)}_A = {\vref}^{\top} V_A^{(t)}\label{eq:procrustes}.
\end{equation}
As we will show in \cref{sec:theoretical_justification}, $\| \uref R_B^{(t)} -  U_B^{(t)} \|_F^2 = 0 $, which allows us to substitute $U_B^{(t)}$ with $\uref R_B^{(t)}$, and similarly $V_A^{(t)}$ with $\vref Q_A^{(t)}$, in~\cref{lora-def}, yielding:
\begin{equation}
\label{eq:delta_update}
    \Delta W^{(t)} = \uref \,\underbrace{R_B^{(t)}\Sigma_{B}^{(t)} V_{B}^{(t)\top}
    U_{A}^{(t)} \Sigma_{A}^{(t)} {Q_A^{(t)}}^\top}_{\text{task-$t$ update in reference coordinates}} {\vref}^{\top}.
\end{equation}
By substituting the least-squares solutions from \cref{eq:procrustes} and using the definitions of $B^{(t)}$ and $A^{(t)}$ from \cref{lora-def}, we can equivalently write:
\begin{equation}
\label{eq:delta_update_equivalent}
     \Delta W^{(t)} = \uref \left({\uref}^{\top} B^{(t)} A^{(t)} {\vref} \right) {\vref}^\top.
\end{equation}
 This reformulation expresses each $\Delta W^{(t)}$ in the reference basis, enabling all updates to be compared or merged within a shared coordinate system.
\begin{definition}[Core Matrix]
 We define the \emph{core matrix} $M^{(t)}$ as:
\begin{equation}
\label{eq:core}
M^{(t)} =  \left({\uref}^{\top} B^{(t)}\right) \left(A^{(t)} \vref\right) \in \mathbb{R}^{Tr \times Tr}.
\end{equation}
This formulation generalizes the middle expression in \cref{eq:naive-merge}, where aligned task-specific bases were implicitly assumed. In contrast, the core matrix $M^{(t)}$ is expressed in the reference bases $(\uref, \vref)$ and thus does not rely on any alignment assumption. It encodes the directional transformation applied by the low-rank update of task $t$, providing a compact and lossless representation of each task update in the shared reference space and enabling efficient merging in a reduced $Tr \times Tr$ space.
\end{definition}

\tit{Reparametrized Model Merging in Core Space.}
Once task-specific updates $\Delta W^{(t)}$ have been reparameterized into their corresponding core matrices $M^{(t)}$ in the shared reference bases $(\uref, \vref)$, Core Space Merging enables model merging to be performed entirely within a compact, aligned, low-rank subspace. Specifically, the merged update is computed by applying a merging operator $\mathcal{M}$ over the set of core matrices:
\begin{equation}
    M_{\text{merged}} = \mathcal{M}\big({\{M^{(t)}\}}_{t=1}^{T}\big),
\end{equation}
where $\mathcal{M}$ may be \textit{any merging function}, ranging from simple arithmetic averaging~\cite{ilharco2023task} to more advanced, non-linear or geometry-aware techniques~\cite{yadav2023tiesmerging,yu2024language}. The final update in the original model space, as described also in \cref{alg:core_merge}, is recovered by projecting $M_{\text{merged}}$ back through the reference bases:
\begin{equation}
    \Delta W = \uref M_{\text{merged}}{\vref}^\top.
\end{equation}

Because Core Space is a lossless representation of the original updates for each individual task (see \cref{eq:delta_update_equivalent}), merging in this space preserves all relevant task information. Furthermore, when $\mathcal{M}$ is \textit{linear}, such as Task Arithmetic, the merge operation in Core Space is \textit{exactly equivalent} to applying the same merge in the full model space:
\begin{equation}
\label{eq:exact-merging-linear}
    \mathcal{M}({\{ \Delta W^{(t)}\}}_{t=1}^T) =  \mathcal{M}({\{{\uref} M^{(t)} {\vref}^{\top}\}}_{t=1}^T) = \uref \mathcal{M}({\{M^{(t)}\}}_{t=1}^T ){\vref}^{\top}.
\end{equation}

Core Space merging offers key benefits over full space merging:
\begin{itemize}
    \item \textbf{Efficiency.} Core matrices $M^{(t)} \in \mathbb{R}^{Tr \times Tr}$ are significantly smaller than their full space counterparts $\Delta W^{(t)} \in \mathbb{R}^{m \times n}$. This reduction enables high-cost merging algorithms to run at a fraction of the time and memory footprint (see \cref{sec:complexity}).
    \item \textbf{Efficacy.} As shown in \cref{sec:results}, Core Space merging improves performance over full space merging when \textit{non-linear} methods are used. In \cref{sec:analyses}, we show that this improvement stems from better alignment and more compact representation of task-specific directions.
\end{itemize}
\subsection{No Information Loss in Core Space Representation}
\label{sec:theoretical_justification}

Replacing $U_B^{(t)}$ and $V_A^{(t)}$  with $\uref R_B^{(t)}$ and $\vref Q_A^{(t)}$ to obtain \cref{eq:delta_update,eq:delta_update_equivalent}, which define the final form of the core matrix, requires that the solutions to the least-squares problems in \cref{eq:least-squares} incur zero alignment error. That is,
\begin{equation}
  \left\| \uref R_B^{(t)} -  U_B^{(t)} \right\|_F^2 = 0, \quad \quad \left\|  {\vref} Q_A^{(t)} - {V_A^{(t)}} \right\|_F^2 = 0.
\end{equation}

In this section, we show that the reference bases $\uref$ and $\vref$, obtained via the SVD of the stacked matrices $B^{(t)}$ and $A^{(t)}$ (see \cref{eq:reference}), \textit{minimize} the total alignment error across all $T$ tasks, achieving an error of \textit{exactly zero}. To illustrate this, we first analyze the alignment error for a single task $t$, focusing on $\uref$. Analogous results hold symmetrically for $\vref$. For clarity, we assume in the following derivations that $T \cdot r \leq m$ and $T \cdot r \leq n$, so that the total LoRA rank does not exceed the maximum possible rank of the target weight matrix. In \cref{appendix-sec:overcomplete}, we provide a more general analysis that removes this assumption and demonstrate that the zero alignment error result continues to hold.
\begin{lemma}
Let $U_B^{(t)} \in \mathbb{R}^{m\times r}$ and $\uref \in \mathbb{R}^{m\times T \cdot r}$ be matrices with orthonormal columns, and let $R_B^{(t)} ={\uref}^\top U_B^{(t)} \in \mathbb{R}^{T \cdot r \times r}$ be the optimal solution minimizing the error of the least-square problem. Then, the optimal alignment error is given by:
\begin{equation}
\label{eq:singularerror}
    \varepsilon_U = \left\| \uref  R_B^{(t)} - U_B^{(t)}\right\|_F^2 = r - \left\| {U_B^{(t)}}^\top \uref \right\|_F^2.
\end{equation}
\end{lemma}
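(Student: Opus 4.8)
The plan is to prove the identity by a direct expansion of the Frobenius norm into a trace, exploiting only the column-orthonormality of the two matrices. Writing $U := U_B^{(t)}$ and $\bar U := \uref$ for brevity, I would record at the outset the facts I intend to use: $U^\top U = I_r$ and $\bar U^\top \bar U = I_{T\cdot r}$, together with the definition $R_B^{(t)} = \bar U^\top U$ supplied by \cref{eq:procrustes}. Crucially, I will \emph{not} need $\bar U \bar U^\top = I_m$, which fails in general since $\bar U$ has more columns-worth of rows than columns of its own; only $\bar U^\top \bar U = I$ is required, and this is what makes the argument go through cleanly.

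First I would rewrite $\varepsilon_U = \|\bar U R_B^{(t)} - U\|_F^2$ as $\operatorname{tr}\bigl((\bar U R_B^{(t)} - U)^\top(\bar U R_B^{(t)} - U)\bigr)$ and expand the product into four trace terms. Then I would substitute $\bar U^\top \bar U = I$ in the quadratic term to collapse it to $\operatorname{tr}({R_B^{(t)}}^\top R_B^{(t)})$, and substitute the definition $R_B^{(t)} = \bar U^\top U$ into the two cross terms $-\operatorname{tr}({R_B^{(t)}}^\top \bar U^\top U)$ and $-\operatorname{tr}(U^\top \bar U R_B^{(t)})$; both of these likewise reduce to $\operatorname{tr}({R_B^{(t)}}^\top R_B^{(t)})$. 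The remaining term $\operatorname{tr}(U^\top U)$ equals $r$ by column-orthonormality of $U$.

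Collecting the four contributions yields $\varepsilon_U = \operatorname{tr}({R_B^{(t)}}^\top R_B^{(t)}) - 2\operatorname{tr}({R_B^{(t)}}^\top R_B^{(t)}) + r = r - \|R_B^{(t)}\|_F^2$. To finish, I would identify $\|R_B^{(t)}\|_F^2 = \|\bar U^\top U\|_F^2 = \|U^\top \bar U\|_F^2 = \|{U_B^{(t)}}^\top \uref\|_F^2$, using invariance of the Frobenius norm under transposition, which is exactly the right-hand side of \cref{eq:singularerror}.

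There is no genuine obstacle here: the computation is a routine trace manipulation, and the only point demanding care is to keep track of which orthonormality relation applies to which matrix — using $\bar U^\top \bar U = I$ (valid) rather than $\bar U \bar U^\top = I$ (invalid). The substantive content, namely that this expression is the true \emph{minimal} alignment error, is already secured by the fact that $R_B^{(t)} = \bar U^\top U$ is the global minimizer from \cref{eq:procrustes}; the present lemma merely evaluates that minimum in closed form. The stronger claim of \emph{zero} error will then follow separately once $\bar U$ is specialized to the reference basis of \cref{eq:reference}, at which point one shows $\|{U_B^{(t)}}^\top \uref\|_F^2 = r$.
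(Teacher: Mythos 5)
Your proposal is correct and follows essentially the same route as the paper's proof: both substitute $R_B^{(t)} = {\uref}^\top U_B^{(t)}$, expand the Frobenius norm as a trace, use only ${\uref}^\top \uref = I_{T\cdot r}$ (the paper phrases this as idempotency of the projector $\uref{\uref}^\top$, which is the same collapse of the quadratic and cross terms you perform), and conclude with $\mathrm{tr}({U_B^{(t)}}^\top U_B^{(t)}) = r$. The only cosmetic difference is that the paper factors the residual as $(\uref{\uref}^\top - I_m)U_B^{(t)}$ before expanding, whereas you expand the four terms directly.
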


The proof, provided in \cref{appendix-sec:error}, leverages the properties of Frobenius norm and the orthonormality of $U_B^{(t)}$ and $\uref$. To formally demonstrate that our chosen reference basis $\uref$ minimizes the alignment error across all $T$ tasks (or equivalently maximize $||{U_B^{(t)}}^\top \uref ||_F^2$ for each task $t$), we first formulate the following constrained optimization problem for a single task, and then extend it to the multi-task scenario:
\begin{equation}
\label{eq:optimalreference}
\begin{gathered}
\max_{U \in \mathcal{S}} \left\| {U_B^{(t)}}^\top U \right\|_F^2
= \max_{U \in \mathcal{S} }\mathrm{tr} \left( U^\top {U_B^{(t)}} {U_B^{(t)}}^\top U \right), \\ \text{where}\quad  \mathcal{S} = \left\{ U \in \mathbb{R}^{m \times T r} \;\middle|\; U^\top U = I_{T \cdot r} \right\}
\end{gathered}
\end{equation}

and $\mathrm{tr}(\cdot)$ denotes the trace operator. The optimization domain is restricted to the Stiefel manifold $\mathcal{S}$ (\ie, the set of matrices with orthonormal columns). The following lemma characterizes the solution to the following optimization problem:

\begin{lemma}
A solution $U^*$ to the quadratic program in \cref{eq:optimalreference} is given by a basis whose columns include the $r$ eigenvectors corresponding to nonzero eigenvalues of $B^{(t)} {B^{(t)}}^{\top}\in \mathbb{R}^{m\times m}$ or, equivalently, by the $r$ left singular vectors of the matrix $B^{(t)}$. Moreover, at the optimum, the objective attains its maximum value $r$, resulting in zero alignment error in \cref{eq:singularerror}.
\end{lemma}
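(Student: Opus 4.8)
The plan is to read \cref{eq:optimalreference} as a Ky Fan–type trace maximization and to exploit the fact that $U_B^{(t)}$ has orthonormal columns, so that $P := U_B^{(t)} {U_B^{(t)}}^\top \in \mathbb{R}^{m\times m}$ is the orthogonal projector onto $\mathrm{col}(B^{(t)})$. Then $P$ is symmetric with $P^2 = P$, its eigenvalues are $1$ with multiplicity $r$ and $0$ with multiplicity $m-r$, and its eigenvectors for eigenvalue $1$ are exactly the columns of $U_B^{(t)}$. Since $B^{(t)} {B^{(t)}}^\top = U_B^{(t)} (\Sigma_B^{(t)})^2 {U_B^{(t)}}^\top$, these are also the eigenvectors of $B^{(t)} {B^{(t)}}^\top$ for its nonzero eigenvalues, and the left singular vectors of $B^{(t)}$, which reconciles the two descriptions in the statement.

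First I would rewrite the objective via cyclicity of the trace as $\mathrm{tr}(U^\top P U) = \mathrm{tr}(P Q)$, where $Q := U U^\top$ is, for $U \in \mathcal{S}$, itself an orthogonal projector of rank $T\cdot r$. Next I would establish the bound $\mathrm{tr}(P Q) \le r$ through the identity
\begin{equation}
r - \mathrm{tr}(P Q) = \mathrm{tr}(P) - \mathrm{tr}(P Q P) = \mathrm{tr}\!\big(P(I-Q)P\big) = \big\|(I-Q)P\big\|_F^2 \ge 0,
\end{equation}
which uses $\mathrm{tr}(P) = r$, the idempotences $P^2 = P$ and $(I-Q)^2 = I-Q$, and the cyclic property; the last equality rewrites the expression as a squared Frobenius norm because $I-Q$ is a symmetric projector. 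The bound is therefore tight exactly when $\|(I-Q)P\|_F^2 = 0$, i.e. when $(I-Q)P = 0$, which is equivalent to the subspace inclusion $\mathrm{col}(U_B^{(t)}) = \mathrm{col}(P) \subseteq \mathrm{col}(U)$.

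To finish, I would exhibit an explicit maximizer. Under the standing assumption $T\cdot r \le m$, the $r$-dimensional subspace $\mathrm{col}(U_B^{(t)})$ fits inside the $T\cdot r$-dimensional column space of a matrix in $\mathcal{S}$, so I may take $U^\ast$ whose first $r$ columns are those of $U_B^{(t)}$ (the left singular vectors of $B^{(t)}$) and whose remaining $T\cdot r - r$ columns form any orthonormal basis of the orthogonal complement. This $U^\ast$ meets the equality condition and hence attains the value $r$, proving simultaneously that the maximum equals $r$ and that such a basis is optimal. Because the stacked-SVD reference basis satisfies $\mathrm{col}(\uref) \supseteq \mathrm{col}(B^{(t)})$, it is itself an optimal $U^\ast$, so $\big\|{U_B^{(t)}}^\top \uref\big\|_F^2 = \mathrm{tr}(P Q^\ast) = r$, and substituting into the preceding lemma yields $\varepsilon_U = r - r = 0$, the claimed zero alignment error.

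I expect the conceptual content to be light and the only care needed to be bookkeeping: confirming that $\|(I-Q)P\|_F^2 = 0$ forces $(I-Q)P = 0$ and hence the subspace inclusion, and checking feasibility of $U^\ast$, which is precisely where the hypothesis $T\cdot r \le m$ enters (the overcomplete case is deferred to \cref{appendix-sec:overcomplete}). The symmetric argument with $P$ replaced by $V_A^{(t)} {V_A^{(t)}}^\top$ handles $\vref$, and observing that $\mathrm{col}(\uref) = \mathrm{col}\big([B^{(1)}; \cdots; B^{(T)}]\big) \supseteq \mathrm{col}(B^{(t)})$ holds for \emph{every} $t$ is what turns this single-task optimality into simultaneous zero error across all $T$ tasks.
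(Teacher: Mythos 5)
Your proof is correct, and it takes a genuinely different route from the paper's. The paper proceeds via Lagrange multipliers on the Stiefel constraint: it writes the augmented objective, derives the first-order conditions $U_B^{(t)} U_B^{(t)\top} U_* = U_*\Lambda_*$ and $U_*^\top U_* = I$, verifies that the explicit eigenvector basis satisfies them, evaluates the objective there to get $r$, and separately (its ``Step 2'') shows the eigenvectors of $U_B^{(t)}U_B^{(t)\top}$ and of $B^{(t)}B^{(t)\top}$ coincide. You instead recast the objective as $\mathrm{tr}(PQ)$ for two orthogonal projectors and prove the sharp global bound $r - \mathrm{tr}(PQ) = \lVert (I-Q)P\rVert_F^2 \ge 0$ directly, with equality iff $\mathrm{col}(U_B^{(t)}) \subseteq \mathrm{col}(U)$. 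Your argument buys two things the paper's does not make explicit: first, the Lagrangian conditions are only necessary for stationarity, so the paper technically certifies a stationary point of value $r$ rather than proving $r$ is the global maximum, whereas your Frobenius-norm identity is a genuine upper bound; second, your exact equality characterization as a subspace inclusion makes the multi-task conclusion immediate (since $\mathrm{col}(\uref) \supseteq \mathrm{col}(B^{(t)})$ for every $t$), where the paper needs a separate argument summing traces over tasks. The paper's route, in exchange, is more self-contained in exhibiting the maximizer constructively and in spelling out the eigenspace equivalence that your one-line observation $B^{(t)}B^{(t)\top} = U_B^{(t)}(\Sigma_B^{(t)})^2 U_B^{(t)\top}$ compresses. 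Both proofs are sound; yours is the tighter argument.
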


We refer the reader to \cref{appendix-sec:optimal-bases} for a detailed proof. Briefly, the result follows by applying the method of Lagrange multipliers to augment the optimization objective with the Stiefel manifold constraint and then enforcing stationarity by setting the gradient of the Lagrangian to zero.

\begin{algorithm}[t]
\caption{Core Matrix Alignment and Merging}
\label{alg:core_merge}
\begin{algorithmic}[1]
\Require Low-rank updates $\{(A^{(t)}, B^{(t)})\}_{t=1}^T$, merging function $\mathcal{M}(\cdot)$.

\State Stack $A^{(t)}$ vertically, $B^{(t)}$ horizontally
\State Compute $\vref$: $\text{stack}(A^{(t)}) = U_A \Sigma_A \underline{V_A^{\mathrm{ref}\top}}$ \Comment{reference bases}
\State Compute $\uref$: $\text{stack}(B^{(t)}) = \underline{U_B^{\mathrm{ref}}} \Sigma_B V_B^\top$
\For{$t = 1$ to $T$}
    \State Compute: $M^{(t)} = ({U_B^{\mathrm{ref}}}^\top B^{(t)}) (A^{(t)} V_A^{\mathrm{ref}})$ \Comment{\cref{eq:core}}
\EndFor

\State Merge aligned core matrices: $M_{\mathrm{merged}} = \mathcal{M}(\{M^{(t)}\}_{t=1}^T)$
\State \Return $\Delta W = U_B^{\mathrm{ref}} M_{\mathrm{merged}} V_A^{\mathrm{ref}\top}$ \Comment{reconstructed merged model}
\end{algorithmic}
\end{algorithm}

\tit{Extension to multiple tasks.} Achieving zero reconstruction error for a single model $t$ does not guarantee optimality for any other model $t' \neq t$. Therefore, we aim to identify a reference basis $U^{*}$ that jointly optimizes \cref{eq:optimalreference} across all $T$ models. We formulate this global problem as:
\begin{equation}
\label{eq:procrustesglobal}
 \max_{U \in \; \mathcal{S}} \sum_{t=1}^T \mathrm{tr}(U^\top {U_B^{(t)}} {U_B^{(t)}}^\top U) = \max_{U \in \; \mathcal{S}} \mathrm{tr}(U^{\top} \mathbf{U}_B \mathbf{U}_B^\top U),
\end{equation}
where $\mathbf{U}_B = \big[ U_B^{(1)}, \;\; U_B^{(2)}, \;\; \dots, \;\; U_B^{(T)} \big]$ denotes the horizontal concatenation of all $U_B^{(t)}$ matrices. The equality in \cref{eq:procrustesglobal} follows directly from the linearity of the trace operator and the distributivity of matrix multiplication concerning matrix addition: $M^\top A_1 M + M^\top A_2 M = M^\top (A_1 + A_2) M$.

By considerations analogous to the single task-case, a global solution $U^{*}$ is given by the top $T\cdot r$ left singular vectors of the matrix $\mathbf{B}$, obtained by vertically stacking each matrix $B^{(t)}$, \ie, $U^{*}=\uref$. This choice ensures zero alignment error simultaneously across all $T$ tasks, consistent with the procedure described in \cref{sec:core-space}.
\subsection{Computational Complexity Analysis}\label{sec:complexity}
\begin{figure}
    \centering
    \includegraphics[width=\linewidth]{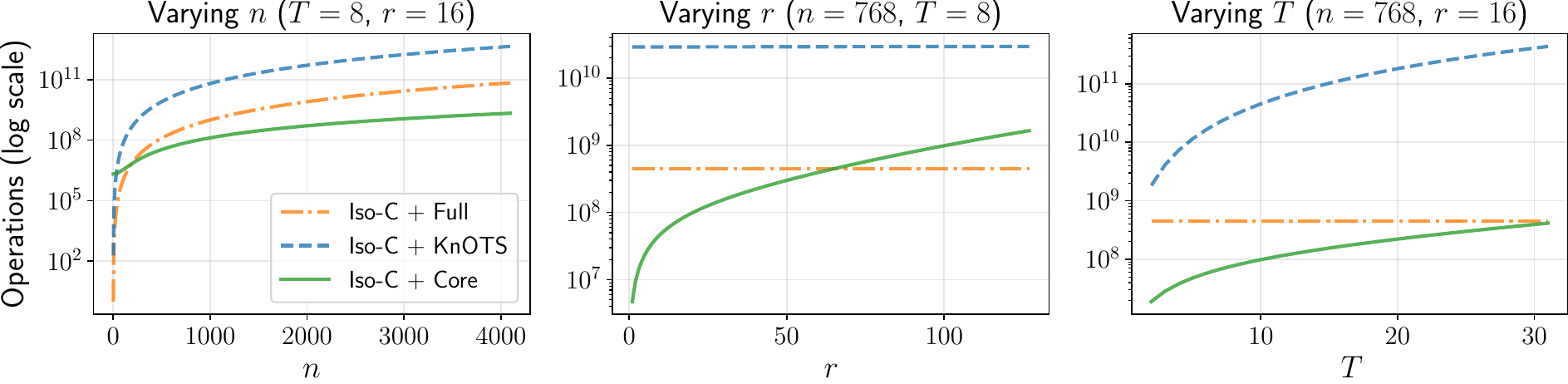}
    \caption{\textbf{Core Space merging is more efficient than the previous state-of-the-art KnOTS.} The cost is similar to full space merging, which results in much lower performance. We visualize the number of operations performed to merge $T$ rank $r$ LoRA modules of final shape $n \times n$.}\label{fig:complexity}
\end{figure}

\begin{wraptable}{r}{0.45\textwidth}
\centering
\vspace{-0.5em}
\caption{\label{table:complexity}$\mathcal{O}(\cdot)$ time complexities. The cheapest method is highlighted in \textbf{bold}  ($T, r \ll n$).}
\vspace{0.5em}
\resizebox{\linewidth}{!}{
\begin{tabular}{llll}
\toprule
\textbf{Space} & TA & Iso-C & TSV \\
\midrule
Full   & $\bm{n^2 T r}$ & $n^3$ & $n^3 T$ \\
KnOTS  & $n^3 T^2$ & $n^3 T^2 + n^2 T r$ & $n^3 T^2 + T^3 r^2 n$ \\
\ourrow
\textbf{Core}   & $\bm{n^2 T r}$ & $\bm{n^2 T r + T^3 r^3}$ & $\bm{n^2 T r + T^4 r^3}$ \\
\bottomrule
\end{tabular}}
\vspace{-0.5em}
\end{wraptable}
We summarize the time complexities of TA, Iso-C, and TSV merged in all three spaces (Full, KnOTS, and our Core) in \cref{table:complexity,fig:complexity}. Details on the derivation of the complexities are in Appendix~\ref{appendix-sec:computational}.
Our approach exhibits a time complexity comparable to that of Task Arithmetic in full space. Our method's additional terms are negligible unless the product $T \cdot r$ becomes significantly large. A key advantage of our method lies in its scalability compared to KnOTS, whose time complexity is super-cubic, driven by a factor that scales cubically with the weight matrix size $n$. Finally, we emphasize the minimal additional overhead incurred when combining our method with Iso-C or TSV in the Core Space; it introduces a cost substantially lower than its counterpart in full space or KnOTS space.

\section{Experimental Results}

\tinytit{Experimental setup.}
We follow the experimental setup of KnOTS and use the LoRA checkpoints provided by the authors~\citep{stoica2024knots}. For the vision experiments, we use two variants of CLIP~\citep{radford2021learning} with ViT-B/32 and ViT-L/14~\cite{dosovitskiy2021an} as vision encoders fine-tuned on a standard set of 8 tasks. We employ Llama 3 8B~\citep {grattafiori2024llama} fine-tuned on 6 NLI tasks for the language experiments. All models are fine-tuned with LoRA~\citep{hu2021lora} with rank 16 applied on all matrices (keys, queries, values, and outputs) across all attention layers.
Following~\cite{stoica2024knots}, we report \textit{normalized accuracy} as a ratio of the accuracy of the merged model on a given task to the accuracy of the model fine-tuned on this task. We also report \textit{absolute accuracy} for the joint-task setting (additional experimental details in \cref{appendix-sec:exp-details}).

\tinytit{Baseline merging spaces.} We compare our proposed Core Space with two alternative merging spaces. \textbf{Full Space} operates in space of full reconstructed weight matrices $\Delta W^{(t)} = B^{(t)}A^{(t)} \in \mathbb{R}^{m \times n}$. \textbf{KnOTS Space}~\cite{stoica2024knots} operates in the space of the right singular vectors of the concatenated reconstructed weight matrices ${\{ \Delta W^{(t)}\}}_{t=1}^{T} \in \mathbb{R}^{m \times nT}$.

\tinytit{Baseline merging methods.} We evaluate each merging space using the following merging methods. \textbf{Task Arithmetic (TA)}~\citep{ilharco2023task} performs a scaled summation of each task matrix $W_{\text{merged}} = W_0 + \alpha \sum_{i=1}^T \Delta W_i$. As this is a linear operation, the results of merging in each space are the same (see \cref{eq:exact-merging-linear} for Core and~\cite{stoica2024knots} for KnOTS). \textbf{TIES}~\citep{yadav2023tiesmerging} trims low-magnitude parameters and averages parameters with dominating sign, while \textbf{DARE}~\citep{yu2024language} preprocesses task vectors by randomly dropping a fraction of parameters and rescaling the remaining ones. \textbf{TSV}~\citep{tsv} concatenates low-rank approximations of task matrices and orthogonalizes them across tasks. \textbf{CART}~\cite{choi2024revisiting} calculates centered task vectors as a difference of fine-tuned weights from the average of all fine-tuned weights and performs task arithmetic on the low-rank approximation of these centered task vectors. \textbf{Iso-C}~\citep{marczak2025task} flattens the spectrum of singular values for a model merged with task arithmetic. As the spectrum flattening can be performed on weights merged with any merging technique, we combine Iso with other merging techniques, denoting it with \iso.
\subsection{Results}\label{sec:results}
\begin{table}[t]
\centering
\caption{Normalized accuracies of merged models on NLI tasks for Llama 3 8B.}
\resizebox{\textwidth}{!}{
\begin{tabular}{lL{1.5cm}ccccccccr}
\toprule
\textbf{Method} & \textbf{Space} & \textbf{SNLI} & \textbf{MNLI} & \textbf{SICK} & \textbf{QNLI} & \textbf{RTE} & \textbf{SCITAIL} & \textbf{Avg ($\Delta$Acc)} & \textbf{Time [s]} & \textbf{Rel. Time}\\
\midrule
\textit{Abs. Accuracy}  & & 92.50 & 90.31 & 91.58 & 94.49 & 89.86 & 96.52 & - & - & - \\
\midrule
TA  & Full                     & 93.57 & 95.28 & 87.96 & 68.71 & 100.0 & 96.73 & 90.38 \deltazero & \rightalign{9} & - \\
\midrule
\multirow{3}{*}{TIES} & Full & 95.17 & 96.19 & 84.18 & 74.18 & 100.0 & 96.78 & 91.08 \deltazero & \rightalign{72}& \deltatime{9} \\
& KnOTS              & 91.82 & 94.19 & 92.97 & 78.57 & 100.0 & 97.61 & 92.53 \deltapos{+1.45} & \rightalign{3000} & \deltatime{375} \\
\ourrow
\cellcolor{white} & \textbf{Core}               & 92.07 & 93.51 & 93.63 & 83.72 & 99.19 & 97.66 & \textbf{93.30 \deltapos{+2.22}} & \rightalign{\textbf{8}} & \deltatime{1} \\
\midrule
\multirow{3}{*}{DARE-TIES}  & Full             & 94.76 & 96.8 & 78.39 & 72.08 & 98.39 & 96.20 & 89.44 \deltazero & \rightalign{108} & \deltatime{13}\\
& KnOTS       & 91.62 & 96.72 & 74.90 & 84.75 & 99.48 & 99.13 & 91.10 \deltapos{+1.66} & \rightalign{3180} & \deltatime{397} \\
\ourrow
\cellcolor{white} & \textbf{Core}        & 92.10 & 93.58 & 93.70 & 83.68 & 99.19 & 97.66 & \textbf{93.32 \deltapos{+3.88}} & \rightalign{\textbf{8}} & \deltatime{1} \\
\midrule
\multirow{3}{*}{TSV}  & Full                   & 95.38 & 95.12 & 88.83 & 76.80 & 101.61 & 97.56 & 92.55 \deltazero & \rightalign{3360} & \deltatime{280}\\
& KnOTS              & 92.53 & 95.83 & 82.77 & 77.01 & 100.0 & 97.08 & 90.87 \deltaneg{-1.68} & \rightalign{4800} & \deltatime{400}\\
\ourrow
\cellcolor{white} & \textbf{Core}              & 95.86 & 95.70 & 89.25 & 83.89 & 102.42 & 97.86 & \textbf{94.16 \deltapos{+1.61}} & \rightalign{\textbf{12}} & \deltatime{1} \\
\midrule
\multirow{3}{*}{Iso-C} & Full                   & 55.00 & 39.04 & 76.54 & 55.90 & 46.77 & 69.25 & 57.08 \deltazero & \rightalign{540} & \deltatime{67} \\
& KnOTS              & 85.28 & 52.86 & 89.43 & 54.90 & 75.00 & 77.73 & 72.53 \deltapos{+15.45} & \rightalign{4860} & \deltatime{607}\\
\ourrow
\cellcolor{white} & \textbf{Core}                & 91.54 & 90.10 & 87.87 & 75.85 & 99.19 & 97.42 & \textbf{90.33 \deltapos{+33.25}} & \rightalign{\textbf{8}} & \deltatime{1} \\
\bottomrule
\end{tabular}}\label{tab:per-task-llama}
\end{table}

\tit{LLMs merging.} We present Llama 3 8B results in natural language inference in \Cref{tab:per-task-llama}. In line with our complexity analysis, merging in Core Space is much more efficient than merging in Full or KnOTS space, bringing up to $600 \times$ merging speed-up. Moreover, merging in Core Space improves the performance of all tested merging methods. In particular, it elevates TSV to 94.16\% average normalized accuracy, achieving state-of-the-art results.

\begin{table}[t]
\centering
\caption{Normalized accuracies of merged models on the vision tasks with ViT-B/32.}
\resizebox{\textwidth}{!}{
\begin{tabular}{lL{1.9cm}ccccccccc}
\toprule
\textbf{Method} & \textbf{Space} & \textbf{Cars} & \textbf{DTD} & \textbf{EuroSAT} & \textbf{GTSRB} & \textbf{MNIST} & \textbf{RESISC} & \textbf{SUN397} & \textbf{SVHN} & \textbf{Avg ($\Delta$ Acc)} \\
\midrule
\textit{Abs. Accuracy} & & 74.00 & 58.30 & 99.00 & 92.70 & 99.30 & 88.40 & 64.50 & 96.20 & - \\
\midrule
TA & Full & 81.97 & 73.72 & 48.97 & 42.24 & 53.12 & 71.50 & 97.46 & 41.25 & 63.78 \deltazero \\
\midrule
\multirow{3}{*}{TIES} & Full & 82.37 & 72.72 & 49.91 & 36.62 & 57.16 & 69.38 & 96.92 & 44.56 & 63.70 \deltazero \\
& KnOTS  & 83.75 & 74.45 & 50.36 & 47.31 & 67.01 & 71.79 & 96.51 & 50.64 & 67.73 \deltapos{+4.03} \\
\ourrow
\cellcolor{white} & \textbf{Core} & 84.74 & 76.46 & 52.19 & 50.41 & 67.36 & 71.21 & 96.45 & 50.18 & \textbf{68.63} \textbf{\deltapos{+4.93}} \\
\midrule
\multirow{3}{*}{DARE-TIES} & Full      & 82.14 & 73.72 & 49.35 & 37.78 & 56.63 & 70.14 & 97.35 & 42.12 & 63.65 \deltazero \\
& KnOTS & 82.01 & 72.90 & 44.15 & 45.54 & 60.59 & 70.89 & 95.56 & 47.64 & 64.91 \deltapos{+1.26} \\
\ourrow
\cellcolor{white} & \textbf{Core}  & 84.57 & 76.09 & 57.09 & 51.01 & 66.64 & 71.39 & 96.16 & 52.14 & \textbf{69.39} \textbf{\deltapos{+5.74}} \\
\midrule
\multirow{3}{*}{TSV} & Full       & 83.44 & 75.55 & 50.99 & 45.03 & 59.31 & 73.33 & 96.40 & 49.23 & \textbf{66.66} \textbf{\deltazero} \\
& KnOTS   & 81.86 & 74.91 & 51.25 & 41.64 & 53.93 & 71.64 & 97.95 & 40.36 & 64.19 \deltaneg{-2.47} \\
\ourrow
\cellcolor{white} & \textbf{Core}    & 83.86 & 75.09 & 52.64 & 45.39 & 58.53 & 72.95 & 97.63 & 45.21 & 66.41 \deltaneg{-0.25} \\
\midrule
\multirow{3}{*}{CART}  & Full       & 83.04 & 81.93 & 50.39 & 70.17 & 59.14 & 79.11 & 99.26 & 49.11 & 71.52 \deltazero \\
& KnOTS   & 83.94 & 75.18 & 52.23 & 54.48 & 64.78 & 74.48 & 95.88 & 55.73 & 69.59 \deltaneg{-1.93} \\
\ourrow
\cellcolor{white} & \textbf{Core}   & 80.83 & 83.94 & 54.99 & 73.28 & 66.25 & 80.95 & 98.69 & 48.57 & \textbf{73.44} \textbf{\deltapos{+1.92}} \\
\midrule
\multirow{3}{*}{TIES \iso} & Full & 78.86 & 74.45 & 60.01 & 39.02 & 66.65 & 70.30 & 98.39 & 48.59 & 67.03 \deltazero \\
& KnOTS & 78.46 & 80.38 & 58.81 & 64.97 & 72.10 & 76.89 & 98.33 & 49.78 & 72.47 \deltapos{+5.44} \\
\ourrow
\cellcolor{white} & \textbf{Core} & 82.91 & 84.76 & 52.41 & 78.79 & 71.56 & 81.43 & 99.48 & 52.14 & \textbf{75.44} \textbf{\deltapos{+8.41}} \\
\midrule
\multirow{3}{*}{DARE-TIES \iso} & Full & 78.71 & 75.54 & 50.84 & 42.86 & 65.03 & 71.88 & 98.92 & 48.08 & 66.48 \deltazero \\
& KnOTS & 82.93 & 74.18 & 49.31 & 46.73 & 66.64 & 71.82 & 96.72 & 50.57 & 67.36 \deltapos{+0.88} \\
\ourrow
\cellcolor{white} & \textbf{Core} & 83.27 & 83.12 & 54.55 & 79.04 & 71.83 & 82.08 & 99.36 & 52.37 & \textbf{75.70} \textbf{\deltapos{+9.22}} \\
\midrule
\multirow{3}{*}{TSV \iso} & Full  & 79.38 & 80.38 & 57.99 & 65.64 & 64.22 & 79.74 & 98.59 & 46.49 & 71.55 \deltazero \\
& KnOTS & 80.81 & 83.03 & 58.25 & 74.34 & 67.66 & 79.69 & 98.54 & 49.86 & 74.02 \deltapos{+2.47} \\
\ourrow
\cellcolor{white} & \textbf{Core} & 82.98 & 85.12 & 50.95 & 84.25 & 71.14 & 84.39 & 99.06 & 53.53 & \textbf{76.43} \textbf{\deltapos{+4.88}} \\
\midrule
\multirow{3}{*}{CART \iso}   & Full       & 80.33 & 82.11 & 57.31 & 77.38 & 71.17 & 81.57 & 98.72 & 51.91 & 75.06 \deltazero \\
&  KnOTS    & 82.05 & 80.47 & 56.12 & 64.58 & 62.40 & 78.81 & 99.22 & 45.05 & 71.09 \deltaneg{-3.97} \\
\ourrow
\cellcolor{white} & \textbf{Core}    & 82.93 & 84.21 & 51.14 & 81.32 & 72.12 & 82.83 & 99.33 & 55.32 & \textbf{76.15} \textbf{\deltapos{+1.09}} \\
\midrule
\multirow{3}{*}{Iso-C} & Full        & 80.16 & 83.03 & 51.44 & 74.76 & 70.72 & 79.89 & 98.66 & 50.20 & 73.60 \deltazero \\
& KnOTS   & 80.33 & 79.29 & 57.50 & 67.60 & 65.63 & 79.54 & 99.26 & 46.62 & 71.97 \deltaneg{-1.63} \\
\ourrow
\cellcolor{white} & \textbf{Core}    & 83.35 & 84.30 & 50.13 & 81.97 & 71.07 & 83.46 & 99.17 & 53.90 & \textbf{75.92} \textbf{\deltapos{+2.32}} \\
\bottomrule
\end{tabular}}\label{tab:per-task-vitb}
\end{table}

\tinytit{Per-task evaluation in vision setting.} We present per-task vision results for ViT-B/32 in \Cref{tab:per-task-vitb}. We observe that 8 out of 9 merging methods achieve their highest average accuracy when performed in our proposed Core Space. The best combination -- TSV + Iso-C merged in Core Space -- achieves state-of-the-art average normalized accuracy of 76.3\%. It significantly outperforms the previously reported SoTA of TIES in KnOTS space, achieving 68.0\%~\cite{stoica2024knots}. Similar conclusions hold for experiments on ViT-L/14 presented in \cref{appendix-sec:vitl}.

\tinytit{Heterogeneous ranks.} While handling LoRA modules with heterogeneous ranks might seem non-trivial, our method supports it seamlessly without modification. Even with different ranks, the modules can be concatenated across tasks to form an aggregate basis spanning the combined subspaces, after which projection and alignment are applied to each local task core matrix. Since SVD makes no assumptions about input ranks, it yields valid orthonormal bases in all cases, enabling our framework to merge variable-rank LoRA modules naturally. We evaluate this setting by assigning rank 16 to half the tasks and rank 64 to the rest; the results reported in \cref{appendix-sec:heterogeneous-ranks} show that our method still outperforms other approaches.

\tinytit{Additional PEFT methods} Our method can also be applied to other PEFT methods, such as VeRA~\cite{kopiczko2023vera}. In VeRA, $\Delta W = \Lambda_b B \Lambda_d A$, where $A \in \mathbb{R}^{r \times n}$, $B \in \mathbb{R}^{m \times r}$, $\Lambda_b \in \mathbb{R}^{1\times m}$, and $\Lambda_d \in \mathbb{R}^{r\times1}$. Unlike LoRA, in VeRA the $A$ and $B$ matrices are randomly chosen, frozen, and shared across the network, while only the two scaling vectors $\Lambda$ are learned for each layer. To adapt VeRA to our Core Space merging, we absorb the scaling vectors into the matrices, \ie, $\Tilde{B}=\Lambda_b B$ and $\Tilde{A}=\Lambda_d A$, and then treat $\Tilde{A}$ and $\Tilde{B}$ as the LoRA $A$ and $B$ matrices. To confirm that our method also works with VeRA, we report additional experiments in \cref{appendix-sec:vera}, which show that our method outperforms other baselines also in this setting.

\begin{table}[t]
\centering
\small
\caption{Joint-task setting absolute accuracy of merged models on the vision tasks with ViT-B/32.}
\resizebox{\textwidth}{!}{
\begin{tabular}{lcccccccccc}
\toprule
\multirow{2}{*}{\textbf{Space}}  & \multirow{2}{*}{TA} & \multirow{2}{*}{TIES} & \multirow{2}{*}{DARE-TIES} & \multirow{2}{*}{TSV} & \multirow{2}{*}{CART} & \multirow{2}{*}{\shortstack{TIES\\\iso}} & \multirow{2}{*}{\shortstack{DARE-TIES\\\iso}} & \multirow{2}{*}{\shortstack{TSV\\\iso}} & \multirow{2}{*}{\shortstack{CART\\\iso}} & \multirow{2}{*}{Iso-C} \\ \\
\midrule
Full & 43.5 & 43.6 & 44.0 & \textbf{45.4} & 44.8 & 43.5 & 44.3 & 48.3 & 44.8 & 52.1 \\
KnOTS & 43.5 & 46.8 & 45.2 & 44.6 & 44.7 & 40.5 & 44.8 & 51.4 & 52.6 & 52.9 \\
\ourrow
\textbf{Core} & 43.5 & \textbf{47.4} & \textbf{47.6} & 44.5 & \textbf{49.6} & \textbf{54.1} & \textbf{54.0} & \textbf{55.7} & \textbf{55.6} & \textbf{55.9} \\
\bottomrule
\end{tabular}
}\label{tab:joint-vitb}
\end{table}

\tinytit{Joint-task evaluation in vision setting.} We also evaluate vision models in the challenging joint-task setting introduced in~\cite{stoica2024knots}, in which the task ID is unknown during inference. Instead of performing a multi-task evaluation, it evaluates the merged model on the union of all classes, requiring the model to distinguish between classes from all tasks. We present the results in \Cref{tab:joint-vitb}. Core Space facilitates merging with almost all methods, achieving state-of-the-art results when combined with Iso-C.
\subsection{Analysis}\label{sec:analyses}
\begin{figure}[t]
  \begin{minipage}{.49\linewidth}
    \centering
    \includegraphics[width=1.0\textwidth]{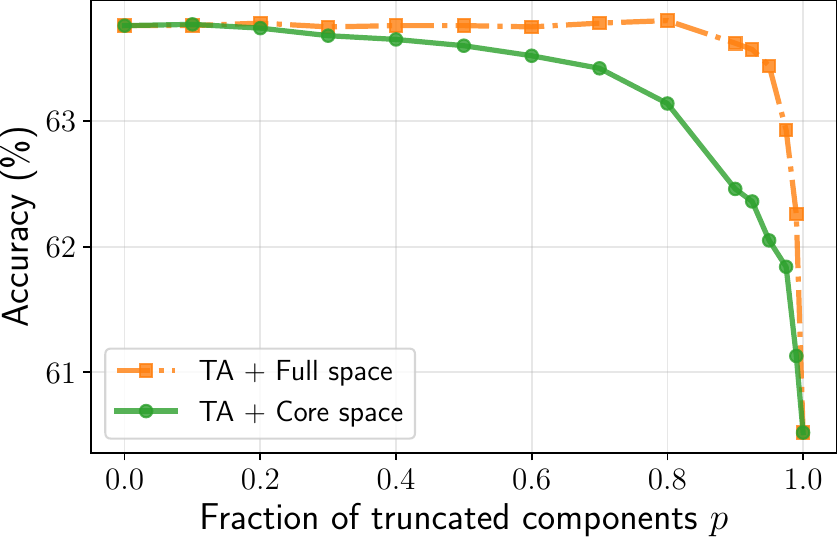}
    %\vspace{-0.15in}
    \caption{
        Most components in full space are irrelevant when doing Task-Arithmetic (TA). Removing any components from the Core Space results in a performance drop, showing that it is an information-dense space. We report the results on vision tasks with ViT-B/32.
    }\label{fig:truncation}
  \end{minipage}
  \hfill
  \begin{minipage}{.49\linewidth}
    \centering
    \includegraphics[width=1.0\textwidth]{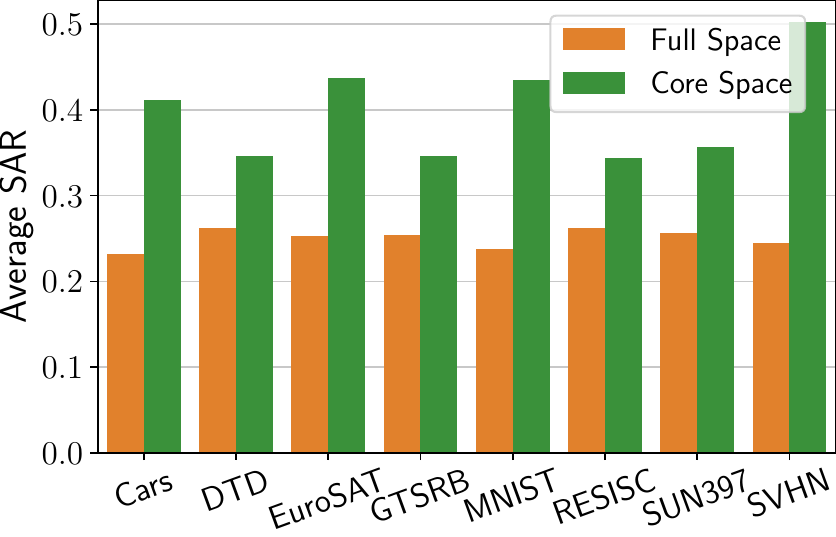}
    %\vspace{-0.15in}
    \caption{\textbf{Subspace Alignment Ratio} (SAR)~\cite{marczak2025task}. Each bar shows the average SAR between LoRA task matrices, Full, and Core Space. In Core Space, task matrices exhibit higher SAR. The associated performance gains suggest that better alignment facilitates more effective merging.}\label{fig:sar}
  \end{minipage}
\end{figure}
\tinytit{Truncation.} We herein compare the utilization of full space and Core Space for models merged with TA. Firstly, we calculate the SVD of the merged matrices: $\Delta W_{\text{merged}}$ for full space and $M_{\text{merged}}$ for Core Space. Then, we truncate a fraction of $p$ least significant values, \ie, $\sigma_i = 0$ for $i > (1-p) * \text{dim}(\Sigma)$, and observe a drop in the accuracy of the merged model after truncation. As shown in \Cref{fig:truncation}, in full space we can truncate a fraction up to $p=0.8$ values without performance loss, while in Core Space, truncation of any component results in a performance drop. It shows that the Core Space is dense while the full space contains many unused or redundant components. We hypothesize that the compactness of Core Space facilitates model merging as it extracts only the relevant components.

\tinytit{Core Space improves subspace alignment.} In this Section, we evaluate the Subspace Alignment Ratio (SAR)~\cite{marczak2025task} between each pair of LoRA updates fine-tuned on different tasks. The SAR measures how much of the subspace of one task is contained in another and correlates with post-merge performance. We compute SAR in full and Core Space. \Cref{fig:sar} shows that Core Space yields consistently higher alignment. We argue that this result is because the Core Space enforces a shared basis across tasks, which filters out task-specific noise and promotes alignment. In~\cref{sec:app_interference} we show that higher SAR correlates with lower merging interference.

\paragraph{Choice of the reference basis.}
\begin{wraptable}{r}{0.6\textwidth}
\centering
\vspace{-0.25in}
\caption{Ablations on the choice of reference basis. Our basis (3) achieves higher results than the single-task basis (1) and the random orthonormal basis of the same dimensionality (2). We proceed with $\vref$ analogously to $\uref$. We report the TIES-Core results on vision tasks with ViT-B/32.}
\vspace{1em}
\resizebox{\linewidth}{!}{
    \begin{tabular}{L{.2cm}llcc}
    \toprule
    & \textbf{Reference Basis $\uref$} & \textbf{Shape} & \textbf{Avg. Acc.} & \textbf{Avg. $\varepsilon_U$} \\
    \midrule
    (1) & $U^{(1)}_B$ (first task)      & $m \times r$  & 60.4 & 13.4\\
    (2) & Random orthonormal            & $m \times Tr$ & 61.6 & 13.3 \\
    \ourrow
    (3) & Concatenation (\cref{eq:reference}) & $m \times Tr$ & \textbf{68.6} & \textbf{0.0} \\
    \bottomrule
    \end{tabular}
}
\label{tab:ref_basis_ablation}
\end{wraptable}
To evaluate the reference bases choice, we assess the performance when adopting different reference bases, and compute the alignment error $\varepsilon_U$ defined in \cref{eq:singularerror} (averaged over all layers and tasks).
We present the results in \Cref{tab:ref_basis_ablation}. In row (1), we evaluate using the basis of the first task as a reference basis. In row (2), we set the reference basis to a random orthonormal basis of the same dimensionality. These two bases perform much less than our reference basis in row (3). Moreover, we confirm that the optimal reference basis from row (3) achieves zero alignment error. Additionally, we verified experimentally that even in the extreme case where $T \cdot r > \min(m,n)$ (\eg, $Tr = 2048 > 768$ for merging 8 ViT-B/32 LoRA models), the reconstruction error defined in \cref{eq:singularerror} remains exactly zero, consistent with the generalized theoretical result in \cref{appendix-sec:overcomplete}.

\section{Conclusion}
We propose Core Space, an efficient and effective method for merging LoRA modules. By projecting task-specific LoRA updates into a common subspace, Core Space reduces alignment error, leading to consistent accuracy improvements and SOTA results in both vision and language settings, while remaining computationally efficient. Our evaluations across vision and language domains confirm its scalability and strong performance in practical settings. We believe that Core Space can contribute to more efficient and accessible model adaptation in multi-task settings, particularly for large models.
\newpage
\subsubsection*{Acknowledgments}

We acknowledge the Spanish project PID2022-143257NB-I00, financed by MCIN/AEI/10.13039/501100011033 and ERDF/EU, and Funded by the European Union ELLIOT project. This work was supported by the Fortissimo Plus (FFplus) project Grant Agreement No. 101163317), under the European High‑Performance Computing Joint Undertaking (EuroHPC JU) and the Digital Europe Programme. The authors gratefully acknowledge access to compute resources enabled by FFplus. Funded by the European Union. Aniello Panariello acknowledges travel support from ELIAS (GA no 101120237). Daniel Marczak is supported by National Centre of Science (NCN, Poland) Grant No. 2021/43/O/ST6/02482. Bart{\l}omiej Twardowski acknowledges the grant RYC2021-032765-I and National Centre of Science (NCN, Poland) Grant No. 2023/51/D/ST6/02846.

%%%%%%%%%%%%%%%%%%%%%%%%%%%%%%%%%%%%%%%%%%%%%%%%%%%%%%%%%%%%%%%%%%%%%%%%%%%%%%%%%%%
\bibliography{bib.bib}
\bibliographystyle{abbrv}
%%%%%%%%%%%%%%%%%%%%%%%%%%%%%%%%%%%%%%%%%%%%%%%%%%%%%%%%%%%%%%%%%%%%%%%%%%%%%%%%%%%
%\clearpage
%\input{sections/x_checklist}
\clearpage
\appendix
%\setcounter{table}{0}
%\setcounter{figure}{0}
%\setcounter{equation}{0}
%% Number within appendix section (A.1, A.2 ... for Section A, B.1 ... for Section B)
%\counterwithin{table}{section}
%\counterwithin{figure}{section}
%\counterwithin{equation}{section}

\section*{Appendix}
%\addcontentsline{toc}{section}{Appendix}

\section*{Contents of the Appendix}
\begin{itemize}
  \item A \quad No Information Loss in Core Space Representation
  \begin{itemize}
    \item A.1 \quad Least Squares Solutions
    \item A.2 \quad Alignment Error Quantification
    \item A.3 \quad Optimal Reference Bases
    \item A.4 \quad Generalization Beyond the $T \cdot r \leq m,n$ Assumption
  \end{itemize}
  \item B \quad Computational Complexity Analysis
  \item C \quad Additional Analysis
  \begin{itemize}
      \item C.1 \quad High subspace alignment leads to lower interference
      \item C.2 \quad Rank of the merged update matrices
  \end{itemize}
  \item D \quad Additional Experiment Details
  \begin{itemize}
      \item D.1 \quad Experimental Environment
      \item D.2 \quad Hyperparameter Search
      \item D.3 \quad Code Availability and Pseudocode
  \end{itemize}
  \item E \quad Additional Results
    \begin{itemize}
      \item E.1 \quad Per-task evaluation in vision setting for ViT-L/14
      \item E.2 \quad Experiments with Heterogeneous Ranks
      \item E.3 \quad Extension to VeRA
  \end{itemize}
\end{itemize}

\section{No Information Loss in Core Space Representation}\label{appendix-sec:no-info-loss}

\subsection{Least Squares Solutions}\label{appendix-sec:least-squares}
Let \( U_{A}^{(t)} \in \mathbb{R}^{r \times r} \), \( \Sigma_{A}^{(t)} \in \mathbb{R}^{r \times r} \), \( V_{A}^{(t)} \in \mathbb{R}^{n \times r} \), \( U_{B}^{(t)} \in \mathbb{R}^{m \times r} \), \( \Sigma_{B}^{(t)} \in \mathbb{R}^{r \times r} \), and \( V_{B}^{(t)} \in \mathbb{R}^{r \times r} \) be the components of two low-rank matrices $A^{(t)}$ and $B^{(t)}$ represented in SVD form. Let  $\uref \in \mathbb{R}^{m \times T\cdot r}$ and $\vref \in \mathbb{R}^{n \times T\cdot r}$ be the shared \textit{reference bases}, obtained by taking the left and the right singular vectors from the SVD of the horizontally and vertically stacked low-rank matrices $B^{(t)}$ and $A^{(t)}$, respectively (see Eq.~4 in the main paper). We assume $T\cdot r \leq m$ and $T\cdot r \leq n$, where $T$ is the number of tasks and $r$ is LoRA rank, as both are typically small relative to the feature dimension. Then, the solutions to the least square problems:
\begin{equation}
\label{eq:app-least-squares}
\begin{gathered}
R_B^{(t)} = \argmin_{R \in \mathbb{R}^{T \cdot r \times r}} \left\| \uref R -  U_B^{(t)} \right\|_F^2, \quad Q_A^{(t)} = \argmin_{Q \in \mathbb{R}^{T \cdot r \times  r}} \left\|  {\vref} Q - {V_A^{(t)}} \right\|_F^2,
\end{gathered}
\end{equation}
are given by:
\begin{equation}
  R_B^{(t)} = {\uref}^\top U_B^{(t)}, \quad Q^{(t)}_A = {\vref}^{\top} V_A^{(t)}.
  \label{eq:app-procrustes}
\end{equation}

\begin{proof}
Since $V_A^{(t)}$ and $U_B^{(t)}$ come from the SVDs of $A^{(t)}$ and $B^{(t)}$, they are orthonormal:
\begin{equation*}
 {V_A^{(t)}}^{\top} V_A^{(t)}= I_r \qquad {U_B^{(t)}}^{\top} U_B^{(t)} = I_r,
\end{equation*}
where $I_r$ is the $r \times r$ identity matrix. Similarly, the reference bases $\uref$ and $\vref$ also have orthonormal columns:
\begin{equation*}
    {\vref}^{\top} \vref = I_{T \cdot r}, \qquad {\uref}^{\top}\uref = I_{T \cdot r},
\end{equation*}
where $I_{T \cdot r}$ is the $T \cdot r \times T \cdot r$ and $T \cdot r \times T \cdot r$ identity matrix.

Consider the problems in \cref{eq:app-least-squares}. Each objective is convex and admits a unique global minimizer, as $\uref$ and $\vref$ have full column rank due to their orthonormality. To solve the first problem, we compute the gradient of the objective function with respect to $R$:
\begin{equation*}
    \frac{\partial \left\| \uref R -  U_B^{(t)} \right\|_F^2}{\partial R} =  2 {\uref}^{\top}(\uref R -  U_B^{(t)} ).
\end{equation*}
Setting the gradient to zero and solving the resulting equation gives:
\begin{equation*}
    {\uref}^{\top} \uref R_B^{(t)} = {\uref}^{\top} U_B^{(t)} \quad \Rightarrow \quad R_B^{(t)} =  {\uref}^\top U_B^{(t)},
\end{equation*}
since ${\uref}^{\top}\uref = I_{T \cdot r}$.

Similarly, for the second problem in \cref{eq:app-least-squares}:
\begin{equation*}
    Q_A^{(t)} = {\vref}^{\top} V_A^{(t)}
\end{equation*}
\end{proof}
\subsection{Alignment Error Quantification}\label{appendix-sec:error}

\begin{lemma}
Let $U_B^{(t)} \in \mathbb{R}^{m\times r}$ and $\uref \in \mathbb{R}^{m\times T \cdot r}$ be matrices with orthonormal columns where $T \cdot r = \mathrm{rank}([B^{(1)}, \dots, B^{(T)}]) \leq m$, and let $R_B^{(t)} ={\uref}^\top U_B^{(t)} \in \mathbb{R}^{T \cdot r \times r}$ be the optimal solution minimizing the error of the least-square problem. Then, the optimal alignment error is given by:
\begin{equation}
\label{eq:app-singularerror}
    \varepsilon_U = \left\| \uref  R_B^{(t)} - U_B^{(t)}\right\|_F^2 = r - \left\| {U_B^{(t)}}^\top \uref \right\|_F^2.
\end{equation}
\end{lemma}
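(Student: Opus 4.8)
The plan is to expand the squared Frobenius norm of the residual $\uref R_B^{(t)} - U_B^{(t)}$ directly, using the identity $\|X\|_F^2 = \mathrm{tr}(X^\top X)$ together with the two orthonormality facts already established: ${\uref}^\top \uref = I_{T\cdot r}$ and ${U_B^{(t)}}^\top U_B^{(t)} = I_r$. First I would write
\begin{equation*}
\varepsilon_U = \mathrm{tr}\!\left[\bigl(\uref R_B^{(t)} - U_B^{(t)}\bigr)^\top \bigl(\uref R_B^{(t)} - U_B^{(t)}\bigr)\right],
\end{equation*}
and then distribute the product into four trace terms.

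Next I would simplify each term. The cross term $-\,\mathrm{tr}\bigl({R_B^{(t)}}^\top {\uref}^\top U_B^{(t)}\bigr)$ becomes $-\,\mathrm{tr}\bigl({R_B^{(t)}}^\top R_B^{(t)}\bigr)$ once I substitute the optimal solution $R_B^{(t)} = {\uref}^\top U_B^{(t)}$ from \cref{eq:app-procrustes}; the symmetric cross term behaves identically by the cyclic/transpose invariance of the trace. The quadratic term $\mathrm{tr}\bigl({R_B^{(t)}}^\top {\uref}^\top \uref R_B^{(t)}\bigr)$ collapses to $\mathrm{tr}\bigl({R_B^{(t)}}^\top R_B^{(t)}\bigr)$ using ${\uref}^\top\uref = I_{T\cdot r}$. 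Finally, $\mathrm{tr}\bigl({U_B^{(t)}}^\top U_B^{(t)}\bigr) = \mathrm{tr}(I_r) = r$ by orthonormality of the columns of $U_B^{(t)}$. Collecting the four pieces, the two positive copies of $\mathrm{tr}\bigl({R_B^{(t)}}^\top R_B^{(t)}\bigr)$ and the two negative cross terms partially cancel, leaving
\begin{equation*}
\varepsilon_U = r - \mathrm{tr}\bigl({R_B^{(t)}}^\top R_B^{(t)}\bigr) = r - \bigl\|R_B^{(t)}\bigr\|_F^2.
\end{equation*}

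To reach the stated form I would then rewrite $\|R_B^{(t)}\|_F^2 = \|{\uref}^\top U_B^{(t)}\|_F^2 = \|{U_B^{(t)}}^\top \uref\|_F^2$, where the last equality holds because the Frobenius norm is invariant under transposition. Substituting gives exactly $\varepsilon_U = r - \|{U_B^{(t)}}^\top \uref\|_F^2$, as claimed. The symmetric statement for $\vref$ follows by replacing $U_B^{(t)},\uref$ with $V_A^{(t)},\vref$ throughout.

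I do not anticipate a genuine obstacle here, since the result is essentially a bookkeeping exercise once the optimal $R_B^{(t)}$ is plugged in and the orthonormality relations are invoked. The only point requiring mild care is ensuring the two cross terms are handled correctly: they are scalars, so each equals its own transpose, and after substitution each equals $\mathrm{tr}\bigl({R_B^{(t)}}^\top R_B^{(t)}\bigr)$, making the cancellation against the quadratic term transparent. It is worth emphasizing that this lemma deliberately does \emph{not} assert that $\varepsilon_U$ vanishes; it merely reduces the residual to the clean expression $r - \|{U_B^{(t)}}^\top\uref\|_F^2$, which serves as the objective that the subsequent Stiefel-manifold optimization maximizes to drive the error to zero.
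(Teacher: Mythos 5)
Your proof is correct and follows essentially the same route as the paper's: both expand the squared Frobenius norm via the trace, substitute the optimal $R_B^{(t)} = {\uref}^\top U_B^{(t)}$, and invoke the orthonormality of $\uref$ and $U_B^{(t)}$ to reduce the expression to $r - \|{U_B^{(t)}}^\top \uref\|_F^2$. The only cosmetic difference is that the paper factors the residual as $(\uref{\uref}^\top - I_m)U_B^{(t)}$ and squares the projector, whereas you expand the four trace terms directly; the algebra is equivalent.
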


\begin{proof}
To derive the alignment error, we simplify the notation by temporarily omitting the dependency on the model index $t$. Substituting the definition $R_B = {\uref}^\top U_B$ from \cref{eq:app-procrustes}, we can write:
\begin{align*}
\varepsilon_U &=\vert \vert \uref {\uref}^{\top} U_B - U_B \vert \vert_F^2 =  \vert \vert (\uref {\uref} ^{\top} - I_m) U_B \vert \vert_F^2  \\
&=\mathrm{tr} \left( {\left[ (\uref {\uref}^\top - I_m) U_B \right]}^\top \left[ (\uref {\uref}^\top - I_m) U_B \right] \right)   \\
& = \mathrm{tr} \left( U_B^{\top} {(\uref {\uref}^\top - I_m)}^2 U_B \right) \\
&= \mathrm{tr} \left( U_B^{\top} ( \uref {\uref}^\top \uref {\uref}^\top - 2 \uref {\uref}^\top + I_m   ) U_B \right) \\
&= \mathrm{tr} \left( U_B^{\top} ( \uref {\uref}^\top - 2 \uref {\uref}^\top + I_m   ) U_B \right) \quad\quad (\text{by using }\ {\uref}^\top \uref = I) \\
&=  \mathrm{tr} \left( U_B^\top ( I_m - \uref {\uref}^\top) U_B \right)=\\
&= \mathrm{tr} \left({U_B}^{\top} U_B\right) - \mathrm{tr}\left( U_B^{\top} \uref  {\uref}^{\top} U_B \right)
\quad\quad\quad\quad\ \text{(by linearity  of trace)} \\
&= r - \left\| U_B^{\top} \uref \right\|_F^2,
\end{align*}
where $\mathrm{tr}(U_B^\top U_B)=\|U_B\|_F^2=r$, since $U_B$ has orthonormal columns, and $\mathrm{tr}(U_B^\top \uref {\uref}^\top U_B) = \| U_B^\top \uref \|_F^2$ by the cyclic property of the trace and definition of the Frobenius norm.
% This residual quantifies how well the reference base spans the subspace of the task-specific adapter.
\end{proof}
\subsection{Optimal Reference Bases}\label{appendix-sec:optimal-bases}

\begin{lemma}
A solution $U^*$ to the quadratic program:
\begin{equation*}
\begin{aligned}
\max_{U \in \mathcal{S}} \left\| {U_B^{(t)}}^\top U \right\|_F^2
= \max_{U \in \mathcal{S} }\mathrm{tr} \left( U^\top {U_B^{(t)}} {U_B^{(t)}}^\top U \right), \quad \mathcal{S} = \left\{ U \in \mathbb{R}^{m \times T \cdot r} \;\middle|\; U^\top U = I_{T \cdot r} \right\},
\end{aligned}
\end{equation*}
  is given by any orthonormal basis whose columns include the top $r$ eigenvectors corresponding to nonzero eigenvalues of $B^{(t)} {B^{(t)}}^{\top}\in \mathbb{R}^{m\times m}$ or, equivalently, by the top $r$ left singular vectors of $B^{(t)}$. At the optimum, the objective achieves the maximum value of $r$, yielding zero alignment error
  in $\varepsilon_U$ as defined in \cref{eq:app-singularerror}.
\end{lemma}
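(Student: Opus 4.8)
The plan is to recognize the objective as a trace-maximization (Ky Fan) problem over the Stiefel manifold and to solve it via the Lagrangian stationarity conditions suggested right after the statement, then read off the optimal value and substitute it into the alignment-error identity of \cref{eq:app-singularerror}. First I would set $P := U_B^{(t)} {U_B^{(t)}}^\top \in \mathbb{R}^{m \times m}$ and observe that, because $U_B^{(t)}$ has orthonormal columns, $P$ is the orthogonal projector onto $\mathrm{col}(B^{(t)})$: it is symmetric, idempotent, and has eigenvalue $1$ with multiplicity $r$ and eigenvalue $0$ with multiplicity $m-r$. The unit-eigenvectors are precisely the columns of $U_B^{(t)}$, i.e. the top-$r$ left singular vectors of $B^{(t)}$, so the objective can be written as $\mathrm{tr}(U^\top P U)$ for $U \in \mathcal{S}$.

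Next I would introduce a symmetric multiplier matrix $\Lambda = \Lambda^\top$ enforcing the constraint $U^\top U = I_{T \cdot r}$ and form the Lagrangian $\mathcal{L}(U,\Lambda) = \mathrm{tr}(U^\top P U) - \mathrm{tr}\bigl(\Lambda(U^\top U - I_{T \cdot r})\bigr)$. Because $P$ and $\Lambda$ are symmetric, setting $\partial\mathcal{L}/\partial U = 2PU - 2U\Lambda = 0$ gives the stationarity condition $PU = U\Lambda$. Diagonalizing $\Lambda = O D O^\top$ and replacing $U$ by $UO$ (which remains feasible and leaves the objective unchanged) lets me assume the columns of $U$ are eigenvectors of $P$; the objective value at any stationary point then equals $\mathrm{tr}(\Lambda)$, the sum of the $T\cdot r$ selected eigenvalues of $P$.

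Since $P$ has only $r$ nonzero eigenvalues, each equal to $1$, and $T\cdot r \geq r$, the sum of any $T\cdot r$ eigenvalues is at most $r$, with equality exactly when all $r$ unit-eigenvectors (the top-$r$ left singular vectors of $B^{(t)}$) appear among the columns of $U$; the remaining $T\cdot r - r$ columns are arbitrary orthonormal vectors in $\ker P$ and contribute nothing. Compactness of $\mathcal{S}$ ensures the maximum is attained and must be a critical point, so the maximal value is exactly $r$, achieved by $U^* = \uref$. Substituting this optimal value into $\varepsilon_U = r - \|{U_B^{(t)}}^\top \uref\|_F^2$ from \cref{eq:app-singularerror} yields $\varepsilon_U = r - r = 0$.

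I expect the main obstacle to be the rigorous handling of the stationarity analysis: arguing that every maximizer may be taken to have eigenvector columns (resolving the rotational ambiguity introduced by the symmetric multiplier) and that the global maximum is genuinely realized among these critical points rather than on some degenerate region of the manifold. This is precisely where invoking the Ky Fan maximum principle — that $\max_{U^\top U = I_k}\mathrm{tr}(U^\top P U)$ equals the sum of the $k$ largest eigenvalues of $P$ — gives the cleanest justification, with the Lagrangian computation serving mainly to exhibit the optimal $U^*$ explicitly and to identify the full family of maximizers.
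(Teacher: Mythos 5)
Your proof is correct and follows the same skeleton as the paper's: a Lagrangian on the Stiefel manifold, the stationarity condition $U_B^{(t)} U_B^{(t)\top} U = U\Lambda$, identification of the maximizers with eigenvector bases, optimal value $r$, and hence $\varepsilon_U = 0$. Two points where you genuinely diverge are worth noting. First, your observation that $P = U_B^{(t)} {U_B^{(t)}}^\top$ is the orthogonal projector onto $\mathrm{col}(B^{(t)})$, with eigenvalue $1$ of multiplicity $r$ and eigenvalue $0$ otherwise, makes the equivalence with the left singular vectors of $B^{(t)}$ immediate; the paper instead devotes a separate second step to proving that the nonzero-eigenvalue eigenvectors of $B^{(t)} {B^{(t)}}^\top$ and of $U_B^{(t)} {U_B^{(t)}}^\top$ coincide, via a chain of implications that your projector argument renders unnecessary. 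Second, you are more careful about global optimality: the paper only verifies that its candidate $U_*$ satisfies the first-order conditions and evaluates the objective there, leaving implicit why no other feasible point exceeds the value $r$, whereas you close this gap by diagonalizing the multiplier to reduce every stationary point to an eigenvector basis, bounding the objective by the sum of the $T\cdot r$ largest eigenvalues of $P$ (which is $r$ since only $r$ of them are nonzero), and invoking compactness of $\mathcal{S}$ (or, equivalently, the Ky Fan principle) to guarantee attainment. Your version is therefore slightly more rigorous and more economical, while reaching the same conclusion and the same characterization of the maximizer set.
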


\begin{proof}
The proof proceeds in two steps. First, we show that any orthonormal basis $U_*$ containing the top $r$ eigenvectors of $U_B^{(t)} U_B^{(t)\top}$ solves the optimization problem. Second, we establish that the eigenvectors of $U_B^{(t)} U_B^{(t)\top}$ are the same as those of $B^{(t)} B^{(t)\top}$.

\tit{Step 1: Solving the constrained optimization.} By leveraging the method of Lagrange multipliers, we write the augmented objective function:
\begin{equation}
 \mathcal{L}(U, \Lambda) = \mathrm{tr}(U^\top U_B^{(t)} U_B^{(t)\top} U) - \mathrm{tr}\left(\Lambda (U^\top U - I_{T \cdot r})\right),
\end{equation}
where $\Lambda \in \mathbb{R}^{T \cdot r \times T \cdot r}$ is a matrix of Lagrange multipliers. Taking gradients with respect to $U$ and $\Lambda$, we obtain:
\begin{equation}
\begin{aligned}
\nabla_{U} \mathcal{L}(U, \Lambda)
&= \frac{\partial}{\partial U} \mathrm{tr}(U^\top U_B^{(t)} U_B^{(t)\top} U)
   - \frac{\partial}{\partial U} \mathrm{tr}(\Lambda (U^\top U - I_{T \cdot r})) \\
&= 2 U_B^{(t)} U_B^{(t)\top} U - 2 U \Lambda
\end{aligned}
\end{equation}
\begin{align}
\nabla_{\Lambda} \mathcal{L}(U, \Lambda)
= -\frac{\partial}{\partial \Lambda} \mathrm{tr}(\Lambda (U^\top U - I_{T \cdot r}))
= U^\top U - I_{T \cdot r}.
\end{align}

Setting the gradients to zero gives the two necessary optimality conditions:
\begin{equation}
    \label{eq:first-condition}
    U_B^{(t)} U_B^{(t)\top} U_* = U_* \Lambda_*,
\end{equation}
\begin{equation}
  \label{eq:second-condition}
    U_*^\top U_* = I_{T \cdot r}.
\end{equation}

%We now verify that both conditions are satisfied. The orthonormality constraint in \cref{eq:second-condition} holds by constructing the matrix $U_*$ whose first $r$ columns form an orthonormal basis in $\mathbb{R}^{m}$.

The second condition, \cref{eq:second-condition}, holds by construction: we explicitly choose $U_* \in \mathbb{R}^{m \times T \cdot r}$ to be an orthonormal matrix. Specifically, we define $U_* = [v_1, \ldots, v_r, p_1, \ldots, p_{(T-1)\cdot r}]$, where $\{v_i\}_{i=1}^r$ are orthonormal eigenvectors of $U_B^{(t)} U_B^{(t)\top}$ associated with its nonzero eigenvalues $\lambda_1, \ldots, \lambda_r$, and $\{p_j\}_{j=1}^{(T-1)\cdot r}$ are additional orthonormal vectors chosen to complete the basis.

Next, to verify the first condition reported in \cref{eq:first-condition}, we define the diagonal matrix $\Lambda_* = \mathrm{diag}(\lambda_1, \ldots, \lambda_r, 0, \ldots, 0) \in \mathbb{R}^{T \cdot r \times T \cdot r}$, where the trailing zeros correspond to the eigenvalues associated with the orthogonal complement. Since each $v_i$ is an eigenvector of $U_B^{(t)} U_B^{(t)\top}$ with eigenvalue $\lambda_i$, and each $p_j$ lies in the nullspace of that matrix, we have:
\begin{equation*}
U_B^{(t)} U_B^{(t)\top} v_i = \lambda_i v_i \quad \text{for } i = 1, \dots, r,
\quad \text{and} \quad
U_B^{(t)} U_B^{(t)\top} p_j = 0 \quad \text{for } j = 1, \dots, (T-1)\cdot r.
\end{equation*}

Therefore:% by applying $U_B^{(t)} U_B^{(t)\top}$ column-wise to $U_*$, we obtain:
\[
U_B^{(t)} U_B^{(t)\top} U_* = [\lambda_1 v_1, \dots, \lambda_r v_r, 0, \dots, 0] = U_* \Lambda_*,
\]

which confirms that the first-order condition in \cref{eq:first-condition} is satisfied.

Next, we substitute this result into the original expression to obtain the optimal value:
\[
\mathcal{L}^{\ast} = \mathrm{tr}(U_*^\top U_B^{(t)} U_B^{(t)\top} U_*) = \mathrm{tr}(U_*^\top U_* \Lambda_*) = \mathrm{tr}(\Lambda_*) = \sum_{i=1}^r \lambda_i = \mathrm{tr}(U_B^{(t)} U_B^{(t)\top}) = r,
\]
proving that the maximum value of the quadratic problem is $r$, and the corresponding alignment error $\varepsilon_U$ for $U_*$ is zero.

\tit{Step 2: Equivalence of eigenspaces.} Now we show that, given the matrix $B^{(t)} \in \mathbb{R}^{n \times r}$ from a LoRA-based adaptation, if $v$ is an eigenvector of $B^{(t)} B^{(t)\top}$, then $v$ is also, by construction, an eigenvector of $U_B^{(t)} U_B^{(t)\top}$, where $B^{(t)} = U_B^{(t)} \Sigma^{(t)} V_B^{(t)\top}$:
\begin{align*}
    v \text{ is an eigenvector of } B^{(t)} B^{(t)\top} &\implies B^{(t)} B^{(t)\top} v = \lambda v\\ &\implies U_B^{(t)} {\Sigma^2}^{(t)} U_B^{(t)\top} v = \lambda v \\
    &\implies (B^{(t)} B^{(t)\top}) U_B^{(t)} {\Sigma^2}^{(t)} U_B^{(t)\top} v = (B^{(t)} B^{(t)\top}) \lambda v \\
    &\implies U_B^{(t)} {\Sigma^2}^{(t)} U_B^{(t)\top} v = \lambda B^{(t)} B^{(t)\top} v.
\end{align*}
From the second and fourth rows, we obtain:
\begin{equation}
    \lambda B^{(t)} B^{(t)\top} v = \lambda v
\end{equation}
If $\lambda > 0$, we obtain that $v$ is an eigenvector of $B^{(t)} B^{(t)\top}$ and its corresponding eigenvalue is $=1$. If $\lambda = 0$, then we have:
\begin{align*}
    v \text{ is an eigenvector of } B^{(t)} B^{(t)\top}, \lambda = 0 &\implies B^{(t)} B^{(t)\top} v = 0\\
    &\implies U_B^{(t)} {\Sigma^2}^{(t)} U_B^{(t)\top} v = 0 \\
    &\implies v^\top U_B^{(t)} {\Sigma^2}^{(t)} U_B^{(t)\top} v = 0
\end{align*}
Since the matrix $B$ has rank $r$, it has exactly $r$ strictly positive singular values, while all other singular values beyond rank $r$ are zero. Thus, the matrix ${\Sigma^2}^{(t)}$ is positive definite. Hence, it must be:
\begin{equation}
    v^\top U_B^{(t)} {\Sigma^2}^{(t)} U_B^{(t)\top} v = 0 \implies U_B^{(t)\top} v = 0 \implies U_B^{(t)} U_B^{(t)\top} v = 0,
\end{equation}
which confirms that if $\lambda = 0$, then $v$ is an eigenvector of $U_B^{(t)} U_B^{(t)\top}$ with eigenvalue $0$.
\end{proof}

\tit{Generalization to Multiple Tasks.} This result generalizes directly to the multi-task setting, as defined in the main paper, by applying it to the matrix $\mathbf{B} \in \mathrm{R}^{m \times T \cdot r}$ obtained by horizontally stacking the LoRA updates $B^{(t)}$ from all tasks $T$. Then the optimal reference basis $U_*=\uref$ (as done in our approach) is given by the left singular vectors of $\mathbf{B}$, obtained via SVD. This matrix satisfies both orthonormality and the optimality conditions derived above. Defining $\Lambda_* =\mathrm{diag}(\lambda_1, \ldots, \lambda_{T \cdot r})$, where $\lambda_i$ are the eigenvalues of $\mathbf{B} \mathbf{B}^\top$, guarantees perfect reconstruction and zero alignment error for all tasks.

\subsection{Generalization Beyond the $T \cdot r \leq m, n$ Assumption}\label{appendix-sec:overcomplete}
The derivations in \cref{appendix-sec:least-squares,appendix-sec:error,appendix-sec:optimal-bases} assumed that the total LoRA rank $T \cdot r$ is less than both $m$ and $n$. We now show that this assumption is not necessary, and that \textbf{the reconstruction error remains zero even when $T \cdot r > m$ or $T \cdot r > n$}.

\tinytit{Intrinsic rank.} When $T \cdot r > m$ (for $B$) or $T \cdot r > n$ (for $A$), stacking the LoRA matrices still yields reference bases with intrinsic dimensions:
\begin{equation*}
d_U = \mathrm{rank}([B^{(1)}, \dots, B^{(T)}]) \leq m,
\qquad
d_V = \mathrm{rank}({[A^{(1)}, \dots, A^{(T)}]}^\top) \leq n,
\end{equation*}
since the number of linearly independent directions cannot exceed the number of rows or columns

 We can therefore replace the reference bases $\uref \in \mathbb{R}^{m \times T \cdot r}$ and $\vref \in \mathbb{R}^{n \times T \cdot r}$ with truncated orthonormal bases:
\begin{equation*}
U_B^{\text{ref}} \in \mathbb{R}^{m \times d_U},
\qquad
V_A^{\text{ref}} \in \mathbb{R}^{n \times d_V},
\end{equation*}
whose columns span the full LoRA update space.

\tinytit{Least-Squares Solution.}
Rewriting the least-squares problem (\cref{eq:app-least-squares}) in terms of the truncated $U_B^{\text{ref}}$ gives:
\begin{equation*}
R_B^{(t)} = \argmin_{R \in \mathbb{R}^{d_U \times r}} \big\| U_B^{\text{ref}} R - U_B^{(t)} \big\|_F^2.
\end{equation*}
Following the same derivation as in \cref{appendix-sec:least-squares}, the optimal solution is
\begin{equation*}
R_B^{(t)} = {(U_B^{\text{ref}})}^\top U_B^{(t)},
\end{equation*}
with an analogous expression for $Q_A^{(t)}$.

\tinytit{Alignment Error.}
Substituting this solution into the error expression of \cref{appendix-sec:error} shows that the optimal alignment error remains
\begin{equation*}
\varepsilon_U = r - \| {(U_B^{(t)})}^\top U_B^{\text{ref}} \|_F^2,
\end{equation*}
which achieves zero when $U_B^{\text{ref}}$ spans the column space of all $B^{(t)}$. Hence, the theoretical guarantees extend unchanged to the case $T \cdot r > m,n$.

% % \input{sections/supplementary/1_properties}
\section{Computational Complexity Analysis}
\label{appendix-sec:computational}
\tit{Iso-C Complexity.} To assess the time complexity of our approach, we begin by analyzing that of Iso-C~\cite{marczak2025task}, to establish a baseline for comparison. For each layer, the Iso-C procedure can be broken down into the following steps:
\begin{itemize}
    \item \textit{LoRA $\rightarrow$ full space}: Compute $\Delta^{(t)} = B^{(t)} A^{(t)}$ for $t = 1, \dots, T$, given the matrices $B^{(t)}$ and $A^{(t)}$. The resulting complexity is $\mathcal{O}(T \cdot r \cdot n \cdot m)$. \item \textit{Summation}: Applying task arithmetic in the full space, $\Delta_{\operatorname{TA}} = \sum_t^T \Delta^{(t)}$, involves a cost of $\mathcal{O}(T \cdot n \cdot m)$.
    \item \textit{SVD computation}: Computing the decomposition $\Delta_{\operatorname{TA}} = U \Sigma V^\top$ for an $m \times n$ matrix has a complexity of $\mathcal{O}(m^2 \cdot n + n^3)$~\cite{cs357_svd_2020}.
    \item \textit{Isotropization}: The final step, $\Delta_{\operatorname{Iso-C}} = U \Sigma_{\operatorname{avg}} V^\top$, is dominated by $\mathcal{O}(m^2 \cdot n)$.
\end{itemize}

Overall, the total cost of Iso-C is dominated by $\mathcal{O}(T \cdot r \cdot n \cdot m + m^2 \cdot n + n^3)$. Assuming that $m = n$, then the time complexity of Iso-C is approximately cubic: $\mathcal{O}(n^3 + T \cdot r \cdot n^2)$ with respect to the number of features.

\tit{KnOTS Complexity.} Secondly, we analyze the computational cost of the \textbf{KnOTS} method~\cite{stoica2024knots}:

\begin{itemize}
    \item \textit{LoRA $\rightarrow$ full space}: As in Iso-C, this step has complexity $\mathcal{O}(T \cdot r \cdot n \cdot m)$.
    \item \textit{Concatenation}: Stacking all weight matrices $\Delta W = \left[ \Delta W^{(1)}, \dots, \Delta W^{(T)} \right]$ as block columns of a global matrix has a time complexity of $\mathcal{O}(T \cdot n \cdot m)$.
    \item \textit{SVD computation}: This is performed on a matrix of size $m \times (n \cdot T)$, resulting in a complexity of $\mathcal{O}(n^2 \cdot T^2 \cdot m + m^3)$, by making use of the transpose trick\footnote{$\operatorname{SVD}(P^\top) = U^{'} \Sigma^{'} {V^{'}}^\top \rightarrow P = (U^{'} \Sigma^{'} {V^{'}}^\top)^\top = {V^{'}} \Sigma {U^{'}}^\top$, thus, if $r \ll n$, this will reduce the number of operations. We will apply the transpose trick throughout.}.
    \item \textit{Merge}: Assuming simple task arithmetic is performed in the $V^\top$ space, $T$ blocks of $n$ columns each are summed, yielding a complexity of $\mathcal{O}(T^2 \cdot r \cdot n)$ to compute $V_{\operatorname{merge}}^\top$.
    \item \textit{Reconstruction}: The final step $\Delta_{\operatorname{KNOTS}} = U \Sigma V_{\operatorname{merge}}^\top$ has a complexity of $\mathcal{O}(m \cdot n \cdot T \cdot r + T \cdot r \cdot n)$.
\end{itemize}

Overall, the total cost of KNOTS is dominated by $\mathcal{O}(m^3 + T\cdot n(2r \cdot m + m + n \cdot T \cdot m + T \cdot r + r))$. Assuming $m = n$, the time complexity simplifies to $\mathcal{O}(n^3 T^2)$. Compared to Iso-C, the time complexity of KNOTS remains cubic with respect to the number of features. However, it includes an additional $T^2$ factor that scales quadratically with the number of tasks being merged.

\tit{TSV Complexity.} Then we analyze the cost of TSV~\cite{tsv}:
\begin{itemize}
    \item \textit{LoRA $\rightarrow$ full space}: As previously, this step has complexity $\mathcal{O}(T \cdot r \cdot n \cdot m)$.
    \item \textit{SVD Computation}: In this step SVD is performed on $T$ matrices of size $m\times n$ resulting in a complexity of $\mathcal{O}(T \cdot (m^2 \cdot n + n^3))$.
    \item \textit{Concatenation}: Stacking the first $k$ components of left and right singular vectors for all tasks results in $\mathcal{O}(T\cdot k (n+m))$.
    \item \textit{Global SVD Computation}: The SVD performed on the stacked matrices $n\times m$ requires $\mathcal{O}(2 \cdot (m^2 \cdot n + n^3))$.
    \item \textit{Obtaining orthogonal matrices}: This step requires $\mathcal{O}(2 \cdot m^2 \cdot n)$.
    \item \textit{Merge}: The final merge has a complexity of $\mathcal{O}(m \cdot n \cdot T \cdot r + T \cdot r \cdot n)$.
\end{itemize}
The overall cost of TSV is thus $\mathcal{O}(T \cdot r \cdot m \cdot n + T \cdot r \cdot n + T \cdot m^2 \cdot n + T \cdot n^3 +T \cdot k \cdot m + T \cdot k \cdot n +m^2 \cdot n + n^3)$. Assuming that $m=n$ and $T,r,k \ll n$, the computational cost is dominated by $\mathcal{O}(T\cdot n^3)$.

\tit{Core Space Complexity.} Finally, we analyze the cost of our approach in \textbf{Core Space}.
\begin{itemize}
  \item \textit{Stacking $A^{(t)}$ and $B^{(t)}$}: Stacking two sequences of $T$ matrices -- one with each matrix of shape $r \times n$ and the other with each matrix of shape $m \times r$ -- results in a cost of $\mathcal{O}(T \cdot r (n + m))$.
  \item \textit{SVD computation}: The stacked global $A$ matrix has shape $(T \cdot r) \times n$; hence, the cost of its SVD is $\mathcal{O}(n^2 \cdot (T \cdot r) + (T \cdot r)^3)$, by using the transpose trick. The stacked $B$ matrix has shape $m \times (T \cdot r)$, with a cost of $\mathcal{O}(m^2 \cdot (T \cdot r) + (T \cdot r)^3)$.  The overall cost of this step is $\mathcal{O}((m^2+n^2) \cdot (T \cdot r) + 2 \cdot (T \cdot r)^3)$.

  %\item \textit{Low-rank loop}: Each iteration involves the steps detailed in the following subitems. To summarize, the total time complexity is dominated by $\mathcal{O}(T \cdot r (m^2 + n^2 + m \cdot r + n \cdot r + r^2))$, where $T \cdot$ accounts for all the aligned cores that must be computed (one per model).
  %\begin{itemize}
  %  \item \textit{SVD computation}: Computing the SVD of $A^{(t)}$ and $B^{(t)}$ has an overall complexity of $\mathcal{O}(m^2 \cdot r + r^3 + n^2 \cdot r + r^3)$. Note that we apply the transpose trick for $B^{(t)}$.
  %  \item \textit{Core computation}: The calculation of $M^{(t)}$ is dominated by $\mathcal{O}(r^3)$.
  %  \item \textit{Alignment}: To compute the alignment matrix $Q^{(t)}_A$, one must compute the $r \times (T \cdot r)$ matrix $V^{(t)\top}_A V^{\operatorname{ref}}_A$ (cost $\mathcal{O}(T \cdot r^2 \cdot n)$). Following the same approach, for $Q^{(t)}_B$, the cost is $\mathcal{O}(T \cdot r^2 \cdot m)$. Hence, to compute both alignment matrices, the time complexity is dominated by $\mathcal{O}(T \cdot r^2 \cdot (m + n))$.
  %  \item \textit{Core alignment}: the cost of multiplying three matrices of size $(T \cdot r \times r) (r \times r) (r \times T \cdot r)$ is, overall, $\mathcal{O}(T \cdot r^3 + T^2 \cdot r^3)$.
  %\end{itemize}
  \item \textit{Low-rank loop}: In the optimized version of the low-rank loop, we only compute the matrix multiplication to obtain the aligned matrices. In this case the total cost is $\mathcal{O}(T \cdot (T m r^2 + T r^2 n + T^2 r^3))$ (using the optimal matrix multiplication order $(\uref B)(A \vref)$). Assuming that $T,r << n, m$, the cost is dominated by $\mathcal{O}(T^2 r^2 (m + n))$.

  \item \textit{Merge:} Assuming simple task arithmetic is performed in the aligned Core Space, the cost is $\mathcal{O}(T^3 \cdot r^2)$.
  \item \textit{Isotropization:} Optionally, Iso-C can be applied in the Core Space; since the Core Space is defined within a square matrix of dimension $T \cdot r \times T \cdot r$, this step adds an additional time complexity of $\mathcal{O}(T^3 r^3)$.
  \item \textit{Reconstruction:} The final step requires $\mathcal{O}(m \cdot T^2 \cdot r^2 + m \cdot T \cdot r \cdot n)$.
\end{itemize}

To sum up, our approach involves:
\begin{align*}
    \mathcal{O}(&\underbrace{T r (n + m)}_{\text{Stacking}} + \underbrace{T r(m^2+n^2) + 2 {(T r)}^3}_{\text{SVD refs.}} + \\
    & + \underbrace{(T^2 r^2 (m + n))}_{\text{Low-rank loop.}} + \underbrace{Tr(T^2 r + T r m + m  n)}_{\text{Merge \& Rec.}}) = \\
\mathcal{O}(&T r (m + n + 2m^2 + 2 n^2 + m n + r^2 ) + T^2 r^2 (2m + n + T) + T^3 r^3 )
\end{align*}

If we assume that $m = n$, the total time complexity simplifies to:
\begin{equation}
    \mathcal{O}(T r (2n + 5n^2 + r^2) + T^2 r^2 (3n + T) + T^3 r^3),
\end{equation}
which, if we assume $T,r \ll n$, is dominated by:
\begin{equation}
\mathcal{O}(T r n^2)
\end{equation}

\section{Additional Analysis}

\subsection{High subspace alignment leads to lower interference}
\label{sec:app_interference}
In this Section, we experimentally show that merging in Core Space reduces interference when merging models. We follow~\cite{yang2024representation, marczak2025task} and measure the interference as the L1 distance between the final embeddings of task-specific models and the merged one. We compare the interference when merging with TSV + Iso-C in Full Space versus Core Space. For each dataset, we collect the activations from the final layer (\ie, the projection to a common vision-language space) of both the task-specific model and the merged model. We present the average distance across all the samples in the test set. We observe lower interference when merging in Core Space, highlighting its effectiveness. Note that Full Space merging, which causes higher interference, also exhibits higher SAR in \cref{sec:analyses}.

\begin{figure}
    \centering
    \includegraphics[width=0.98\linewidth]{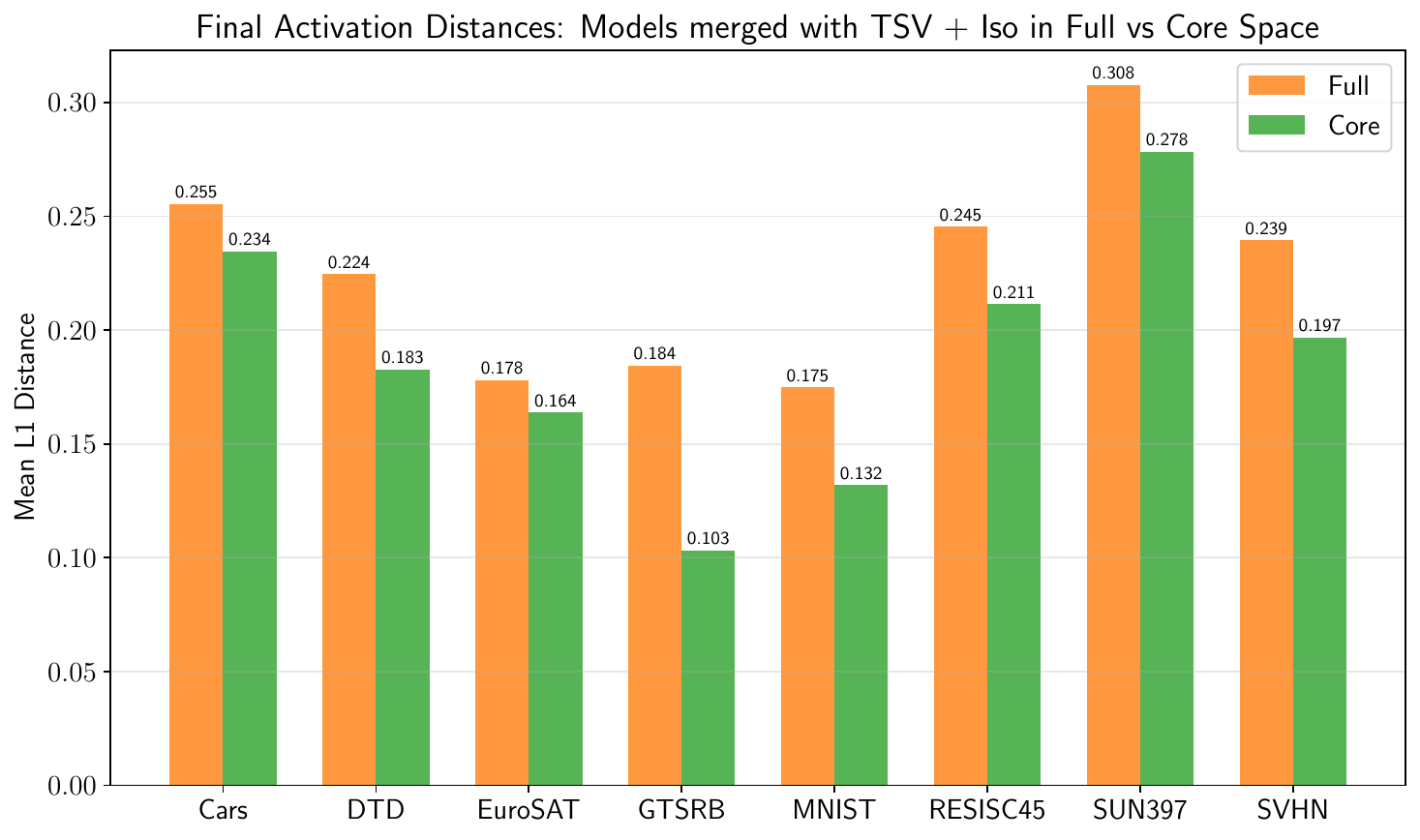}
    \caption{Mean L1 distance between the final embeddings of task-specific models and the merged one using TSV + Iso in Full and Core Space. We used ViT-B/16 model.}
    \label{fig:activations_l1_dist}
\end{figure}

\subsection{Rank of the merged update matrices}
Consider $T=8$ ViT-B/32 models fine-tuned with LoRA of rank $r=16$. The merged update matrices $\Delta W$ resulting from different merging methods and spaces can have different effective ranks $r_{\Delta W}$. Table~\ref{tab:rank_merged} reports the average rank of $\Delta W$ across all layers.

\begin{table}[h]
\centering
\caption{Average rank $r_{\Delta W}$ of merged update matrices $\Delta W$ obtained by merging 8 ViT-B/32 LoRA models with $r=16$ (so $Tr=128$).}
\label{tab:rank_merged}
\begin{tabular}{lccc}
\toprule
\textbf{Merging Space} & \textbf{TA} & \textbf{TSV} & \textbf{TIES} \\
\midrule
Full   & 128.00 & 128.00 & 766.25 \\
KnOTS  & 128.00 & 128.00 & 128.00 \\
Core   & 128.00 & 128.00 & 128.00 \\
\bottomrule
\end{tabular}
\end{table}

In most cases, the target rank of $\Delta W$ is equal to $Tr = 128$. The only exception is merging with TIES in Full space, where for weight matrices $W \in \mathbb{R}^{m \times n}$ the effective rank approaches the dimensionality of the matrices $d = \min(m, n) = 768$. This phenomenon arises because TIES performs trimming on the reconstructed weight matrices $\Delta W_t = BA$, which destroys the low-rank structure. In contrast, both Core and KnOTS operate directly in a constrained $Tr$-dimensional space, ensuring that the merged $\Delta W$ maintains the intended rank.
\section{Additional Experiment Details}\label{appendix-sec:exp-details}

\subsection*{Licenses of Used Datasets and Models}

In our research, we employed publicly available datasets and models, each governed by specific licenses. Below, we outline the sources and associated licenses for each:

\begin{itemize}
    \item \textbf{KnOTS LoRA Checkpoints}~\cite{stoica2024knots}: The KnOTS repository, which provides LoRA-adapted model checkpoints and training scripts, is licensed under the MIT License. This permissive license allows for reuse and modification with proper attribution.

    \item \textbf{Cars196}~\cite{krause20133d}: The Cars196 dataset is available for non-commercial research purposes. Specific licensing details are not explicitly provided.

    \item \textbf{Describable Textures Dataset (DTD)}~\cite{cimpoi_describing_2014}: The DTD is made available to the computer vision community for research purposes. The dataset is licensed under the Creative Commons Attribution 4.0 License (CC BY 4.0).

    \item \textbf{EuroSAT}~\cite{helber2018introducing}: The EuroSAT dataset is licensed under the MIT License.

    \item \textbf{German Traffic Sign Recognition Benchmark (GTSRB)}~\cite{stallkamp2012man}: The GTSRB dataset is licensed under the Creative Commons Zero (CC0) Public Domain Dedication.

    \item \textbf{MNIST}~\cite{lecun2010mnist}: The MNIST dataset is publicly available for research purposes. Specific licensing details are not explicitly provided; users are advised to consult the dataset's source for more information.

    \item \textbf{NWPU-RESISC45}~\cite{cheng_remote_2017}: The NWPU-RESISC45 dataset is licensed under the Creative Commons Attribution 4.0 License (CC BY 4.0).

    \item \textbf{SUN397}~\cite{sun397}: The SUN397 dataset is available for research purposes only. Specific licensing details are not explicitly provided; users are advised to consult the dataset's source for more information.

    \item \textbf{Street View House Numbers (SVHN)}~\cite{svhn}: The SVHN dataset is available for non-commercial use only.

    \item \textbf{Stanford Natural Language Inference (SNLI)}~\cite{bowman2015large}: The SNLI dataset is licensed under the Creative Commons Attribution-ShareAlike 4.0 International License (CC BY-SA 4.0).

    \item \textbf{Multi-Genre Natural Language Inference (MNLI)}~\cite{williams2017broad1coverage}: The MNLI dataset is released under the Open American National Corpus (OANC) license, which permits free use, modification, and sharing under permissive terms.

    \item \textbf{Sentences Involving Compositional Knowledge (SICK)}~\cite{marelli2014sick}: The SICK dataset is distributed under a Creative Commons Attribution-NonCommercial-ShareAlike license.

    \item \textbf{Question Natural Language Inference (QNLI)}~\cite{wang2018glue}: The QNLI dataset is part of the GLUE benchmark. Specific licensing details are not explicitly provided; users are advised to consult the dataset's source for more information.

    \item \textbf{Recognizing Textual Entailment (RTE)}~\cite{wang2018glue}: The RTE dataset is part of the GLUE benchmark. Specific licensing details are not explicitly provided; users are advised to consult the dataset's source for more information.

    \item \textbf{SciTail}~\cite{scitail}: The SciTail dataset is licensed under the Apache License 2.0.

\end{itemize}

\subsection{Experimental Environment}
The language experiments with Llama 3 8B were performed with a single 48G NVIDIA L40S. In contrast, the more affordable vision experiments were executed using a single 16G NVIDIA RTX 4080. To keep things fair, the reported times for the language experiments all refer to experiments performed on the same machine.

Our implementation builds directly on the KnOTS codebase~\cite{stoica2024knots} and uses the exact LoRA checkpoints they released. For full details on the original training and adaptation procedures, please refer to~\cite{stoica2024knots}.

\subsection{Hyperparameter Search}
To find optimal hyperparameters for each model, we adopt the widely used validation holdout strategy~\cite{stoica2024knots,marczak2025task,tsv,yadav2023tiesmerging}. Specifically, we perform a linear search for hyperparameters on the validation set, starting from a defined minimum value and incrementally increasing it until performance declines, indicating the optimal range. The identified optimal hyperparameters are then applied to the test set. We use the following search settings:
\begin{itemize}
    \item Scaling factor $\alpha$ starts at $0.1$, increasing in increments of $0.1$. This is used for every approach.
    \item The top-$K$ parameter for TIES and DARE-TIES begins at $10$ and increases in increments of $10$.
    \item The pruning factor $p$ for DARE-TIES starts at $0.1$ and increases in increments of $0.1$.
    \item For CART, the pruning rank is searched over the set $\{0.04, 0.08, 0.16, 0.32\}$, following the methodology of the original paper. Additionally, CART includes an extra scaling factor $\lambda$ in its merging formulation. Specifically, the merged weights are computed as $W_{\text{merged}}=W_0 + \alpha (\theta_{\text{avg}} + \lambda \sum_{t=1}^{T} \Bar{\tau}_t)$, where $\theta_{\text{avg}}$ denotes the average of the updates and $\Bar{\tau}_t$ represents the centered task vector for task $t$. For further details, we refer the reader to~\cite{choi2024revisiting}.
\end{itemize}

In \cref{tab:corespace-hypers}, we report the parameters used for the various merging methods in Core Space across all backbones. Note that we search for the optimal parameters for all methods across all spaces to maintain fairness. For the natural-language inference experiments, we omit CART as its hyperparameter tuning proved prohibitively expensive and exclude \iso\ variants, as they consistently degraded performance.
\begin{table}[ht]
\centering
\caption{Optimal hyperparameters for Core-Space merging on each backbone and merging strategy, including \iso\ variants.}
\label{tab:corespace-hypers}
\begin{tabular}{l l c c c c c}
\toprule
\textbf{Backbone} & \textbf{Merging Method}      & $\alpha$ & Top-\(K\) & Pruning \(p\) & CART rank & $\lambda$ \\
\midrule
\multirow{8}{*}{ViT-B/32}
  & TIES-Core            & 0.6 & 10      & -    & -    & -   \\
  & TIES-Core\iso        & 2.0 & 30     & -    & -    & -   \\
  & DARE-TIES-Core       & 0.6 & 10      & 0.1  & -    & -   \\
  & DARE-TIES-Core\iso   & 2.0 & 30   & 0.1  & -    & -   \\
  & TSV-Core             & 0.2 & -       & -    & -    & -   \\
  & TSV-Core\iso         & 0.9 & -       & -    & -    & -   \\
  & CART-Core            & 0.4 & -       & -    & 0.32 & 5.8 \\
  & CART-Core\iso        & 0.7 & -       & -    & 0.04 & 2.6 \\
  & Iso-C-Core           & 0.9 & - & -              & -            & -            \\

\midrule
\multirow{8}{*}{ViT-L/14}
  & TIES-Core            & 0.4 & 10      & -    & -    & -   \\
  & TIES-Core\iso        & 2.4 & 20  & -    & -    & -   \\
  & DARE-TIES-Core       & 0.4 & 10      & 0.1  & -    & -   \\
  & DARE-TIES-Core\iso   & 2.4 & 20  & 0.2  & -    & -   \\
  & TSV-Core             & 0.2 & -       & -    & -    & -   \\
  & TSV-Core\iso         & 0.9 & -       & -    & -    & -   \\
  & CART-Core            & 0.1 & -       & -    & 0.04 & 6.5 \\
  & CART-Core\iso        & 1.0 & -       & -    & 0.08 & 2.0 \\
  & Iso-C-Core     & 0.9 & -              & -              & -            & -            \\
\midrule
\multirow{4}{*}{Llama 3 8B}
  & TIES-Core            & 1.1 & 80      & -    & -    & -   \\
  & DARE-TIES-Core       & 1.1 & 80      & 0.1  & -    & -   \\
  & TSV-Core             & 0.5 & -       & -    & -    & -   \\
  & Iso-C-Core      & 2.8 & -              & -              & -            & -            \\
\bottomrule
\end{tabular}
\end{table}

\subsection{Code Availability and Pseudocode}
Our Core Space merging implementation is released at \url{https://github.com/apanariello4/core-space-merging}.

\Cref{listing:core-space} gives PyTorch-style pseudocode illustrating how to apply any merging strategy within the Core Space framework.
\begin{listing}
\begin{minted}[
framesep=2mm,
baselinestretch=1.1,
bgcolor=LightGray,
linenos]{python}
from torch.linalg import svd

A_list, B_list = ..., ... 

r, n = A_list[0].shape
m, _ = B_list[0].shape
 
A_stack = torch.cat(A_list, dim=0)  # (T*r, n)
B_stack = torch.cat(B_list, dim=1)  # (m, T*r)

# Calculate reference bases
Vh_A_ref = svd(A_stack, full_matrices=False)[2]  # (T*r, n)
U_B_ref = svd(B_stack, full_matrices=False)[0]   # (m, T*r)
 
M_list = []
for A, B in zip(A_list, B_list):
     M_aligned = (U_B_ref.T @ B) @ (A @ Vh_A_ref.T)
     M_list.append(M_aligned)
 
if merge_strategy == 'TA':
	M_merged = torch.stack(M_list).sum(dim=0)
elif merge_strategy == 'ties':
	M_merged = ties_merging(M_list)
elif merge_strategy == '...':
	M_merged = ...

# Reconstruct delta W
delta_W = U_B_ref @ M_merged @ Vh_A_ref
\end{minted}
\caption{Basic PyTorch pseudocode for model merging in Core Space.}
\label{listing:core-space}
\end{listing}

% # Theory Version
%     # U_A, S_A, Vh_A = svd(A, full_matrices=False)  # (r, r), (r,), (r, n)
%     # U_B, S_B, Vh_B = svd(B, full_matrices=False)  # (m, r), (r,), (r, r)
% 
%     # M = torch.diag(S_B) @ (Vh_B @ U_A) @ torch.diag(S_A)  # (r, r)
%
%     # Calculate alignment matrices
%     # Q_A = Vh_A @ Vh_A_ref.T
%     # R_B = U_B_ref.T @ U_B
%
%     # Align the core matrix 
%     # M_aligned = R_B @ M @ Q_A  # (T*r, T*r)
%
%     # Optimized Version

\section{Additional Results}

\subsection{Per-task evaluation in vision setting for ViT-L/14}\label{appendix-sec:vitl}
We provide in \cref{tab:per-task-vitl} per-task results vision model merging using the ViT-L/14 backbone. Similarly to what we observed for other backbones, in this case, performing the merging in Core Space yields consistent improvements across all methods, resulting in new state-of-the-art results.
\begin{table}[t]
\centering
\caption{Accuracies of merged models normalized against fine-tuned models on the vision datasets with ViT-L/14.}
\resizebox{\textwidth}{!}{
\begin{tabular}{lL{1.5cm}ccccccccc}
\toprule
\textbf{Method} & \textbf{Space} & \textbf{Cars} & \textbf{DTD} & \textbf{EuroSAT} & \textbf{GTSRB} & \textbf{MNIST} & \textbf{RESISC} & \textbf{SUN397} & \textbf{SVHN} & \textbf{Avg ($\Delta$ Acc)} \\
\midrule
\textit{Abs. Accuracy} & & 99.70 & 70.00 & 98.50 & 97.20 & 99.50 & 95.70 & 79.60 & 97.7 & -  \\
\midrule
TA & Full         & 80.01 & 79.50 & 65.59 & 59.98 & 82.20 & 79.55 & 86.71 & 64.74 & 74.79 \deltazero\\ 
\midrule
\multirow{3}{*}{TIES} & Full       & 79.65 & 78.28 & 64.43 & 61.10 & 83.82 & 79.42 & 87.45 & 69.94 & 75.51 \deltazero \\
& KnOTS  & 82.47 & 80.26 & 64.65 & 68.85 & 88.48 & 82.37 & 88.18 & 76.63 & \textbf{78.99} \textbf{\deltapos{+3.48}} \\ 
\ourrow
\cellcolor{white} & \textbf{Core}  & 81.94 & 80.03 & 65.78 & 68.94 & 87.44 & 81.85 & 88.48 & 73.75 & 78.53 \deltapos{+3.02} \\
\midrule
\multirow{3}{*}{DARE-TIES} & Full      & 79.70 & 78.82 & 64.99 & 60.63 & 83.92 & 79.32 & 87.07 & 69.84 & 75.53 \deltazero \\
& KnOTS & 80.41 & 79.65 & 64.84 & 62.95 & 85.33 & 79.73 & 87.55 & 71.00 & 76.43 \deltapos{+0.90}\\
\ourrow
\cellcolor{white} & \textbf{Core} & 82.97 & 80.03 & 65.66 & 68.34 & 87.75 & 82.48 & 88.80 & 75.88 & \textbf{78.99} \textbf{\deltapos{+3.46}} \\
\midrule
\multirow{3}{*}{TSV} & Full        & 82.38 & 80.11 & 66.12 & 68.18 & 85.46 & 83.02 & 87.89 & 70.76 & 77.99 \deltazero\\
& KnOTS   & 80.17 & 79.95 & 66.94 & 60.24 & 83.76 & 79.35 & 86.85 & 65.59 & 75.36 \deltaneg{-2.63} \\
\ourrow
\cellcolor{white} & \textbf{Core}    & 82.10 & 79.80 & 67.05 & 66.92 & 87.50 & 82.32 & 87.71 & 71.89 & \textbf{78.16} \textbf{\deltapos{+0.17}} \\
\midrule
\multirow{3}{*}{CART} & Full         &  91.88 & 88.53 & 75.51 & 80.88 & 68.99 & 92.77 & 88.17 & 64.81 & 81.44 \deltazero\\
& KnOTS    & 79.65 & 79.73 & 64.39 & 58.28 & 80.42 & 78.52 & 86.52 & 63.24 & 73.84 \deltaneg{-7.60} \\
\ourrow
\cellcolor{white} & \textbf{Core} & 86.02 & 86.33 & 72.39 & 82.82 & 91.03 & 85.93 & 88.46 & 72.67 & \textbf{83.21} \textbf{\deltapos{+1.77}} \\ 
\midrule
\multirow{3}{*}{TIES \iso} & Full & 84.11 & 83.07 & 74.94 & 76.66 & 92.88 & 87.88 & 88.58 & 74.61 & 82.84 \deltazero\\
& KnOTS & 86.44 & 88.23 & 78.74 & 78.94 & 94.27 & 87.94 & 88.62 & 72.24 & 84.43 \deltapos{+1.59} \\ 
\ourrow
\cellcolor{white} & \textbf{Core} & 91.13 & 90.58 & 79.53 & 87.67 & 86.47 & 90.46 & 90.05 & 72.28 & \textbf{86.02} \textbf{\deltapos{+3.18}} \\
\midrule
\multirow{3}{*}{DARE-TIES \iso} & Full & 83.25 & 81.78 & 73.03 & 77.25 & 86.81 & 86.78 & 88.24 & 74.44 & 81.45 \deltazero\\
& KnOTS & 87.32 & 87.78 & 74.12 & 81.50 & 94.21 & 89.48 & 88.88 & 70.87 & 84.27 \deltapos{+2.82} \\
\ourrow
\cellcolor{white} & \textbf{Core} & 90.84 & 91.12 & 79.15 & 88.14 & 90.19 & 90.73 & 89.92 & 72.05 & \textbf{86.52} \textbf{\deltapos{+5.07}} \\
\midrule
\multirow{3}{*}{TSV \iso} & Full & 86.44 & 89.07 & 82.49 & 84.68 & 90.76 & 90.03 & 87.99 & 67.98 & 84.93 \deltazero\\
& KnOTS  & 88.47 & 90.58 & 77.69 & 83.91 & 87.48 & 90.00 & 88.76 & 67.49 & 84.30 \deltaneg{-0.63}\\
\ourrow
\cellcolor{white} & \textbf{Core} & 91.54 & 91.34 & 80.24 & 86.79 & 87.39 & 91.51 & 89.59 & 71.30 & \textbf{86.21} \textbf{\deltapos{+1.28}} \\
\midrule
\multirow{3}{*}{CART \iso} & Full & 88.78 & 90.43 & 78.59 & 87.04 & 91.96 & 90.96 & 89.32 & 75.18 & 86.53 \deltazero\\ 
& KnOTS  & 88.72 & 89.60 & 80.77 & 79.84 & 87.18 & 89.50 & 88.38 & 68.07 & 84.01 \deltaneg{-2.52} \\ 
\ourrow
\cellcolor{white} & \textbf{Core} & 92.08 & 92.48 & 81.22 & 88.96 & 89.80 & 91.97 & 89.81 & 73.56 & \textbf{87.49} \textbf{\deltapos{+0.96}} \\ 
\midrule
\multirow{3}{*}{Iso-C} & Full & 86.83 & 86.94 & 80.65 & 77.99 & 92.09 & 87.88 & 88.50 & 68.69 & 83.70 \deltazero\\
& KnOTS & 88.27 & 89.75 & 78.36 & 85.41 & 91.65 & 90.93 & 88.85 & 70.97 & 85.52 \deltapos{+1.82}\\
\ourrow
\cellcolor{white} & \textbf{Core}    & 91.23 & 90.28 & 80.28 & 85.29 & 89.71 & 90.96 & 89.58 & 70.66 & \textbf{86.00} \textbf{\deltapos{+2.30}} \\
\bottomrule
\end{tabular}}
\label{tab:per-task-vitl}
\end{table}

\subsection{Experiments with Heterogeneous Ranks}
\label{appendix-sec:heterogeneous-ranks}

In the main paper, we discussed that Core Space merging naturally extends to the heterogeneous rank setting (\cref{tab:per-task-vitb-mixed}). Here we provide additional details.

When tasks are fine-tuned with different LoRA ranks, we horizontally and vertically stack the $B^{(t)}$ and $A^{(t)}$ matrices across tasks as usual. The resulting aggregate matrices have rank equal to the dimension of the union of all task subspaces. Performing SVD on these aggregates yields orthonormal reference bases $\uref$ and $\vref$ that span the combined subspaces, regardless of how individual task ranks vary.

Projection into these reference bases followed by task-specific alignment (see \cref{appendix-sec:least-squares}) guarantees that reconstruction is lossless. This explains why the results in \cref{tab:per-task-vitb-mixed} show that heterogeneous ranks incur no additional degradation in performance under our framework, whereas baselines that lack such alignment struggle with mismatched subspace dimensions.

\begin{table}[t]
\centering
\caption{Normalized accuracies of merged models on the vision tasks with ViT-B/32 with LoRA mixed ranks 16 (Cars, EuroSAT, MNIST, SVHN) and 64 (DTD, GTSRB, RESISC, SUN397).}
\resizebox{\textwidth}{!}{
\begin{tabular}{lL{1.5cm}ccccccccc}
\toprule
\textbf{Method} & \textbf{Space} & \textbf{Cars} & \textbf{DTD} & \textbf{EuroSAT} & \textbf{GTSRB} & \textbf{MNIST} & \textbf{RESISC} & \textbf{SUN397} & \textbf{SVHN} & \textbf{Avg ($\Delta$ Acc)} \\
\midrule
\textit{Abs. Accuracy} &  & 74.00 & 68.03 & 99.00 & 98.00 & 99.30 & 93.85 & 70.85 & 96.20 & -\\

\midrule
TA & - & 81.97 & 65.83 & 47.06 & 51.66 & 57.18 & 72.91 & 89.75 & 48.37 & 64.34 \deltazero \\
\midrule
\multirow{3}{*}{TIES}  & Full & 80.94 & 64.74 & 42.01 & 53.91 & 55.41 & 72.16 & 89.76 & 49.09 & 63.50 \deltazero \\
& KnOTS   & 82.70 & 67.08 & 49.01 & 55.00 & 57.39 & 73.74 & 90.14 & 45.97 & 65.13 \deltapos{+1.63}\\
\ourrow
\cellcolor{white} & \textbf{Core}  & 79.74 & 67.71 & 41.38 & 71.86 & 69.59 & 75.02 & 91.43 & 67.97 & 70.59 \textbf{\deltapos{+7.09}} \\
\midrule
\multirow{3}{*}{DARE-TIES} & Full      & 80.81 & 65.60 & 41.11 & 54.89 & 56.14 & 72.45 & 89.74 & 49.70 & 63.81 \deltazero \\
& KnOTS & 82.98 & 67.08 & 45.57 & 56.34 & 62.34 & 75.65 & 90.06 & 53.53 & 66.69 \deltapos{+2.88}\\
\ourrow
\cellcolor{white} & \textbf{Core}  & 80.98 & 68.80 & 44.26 & 68.07 & 64.43 & 74.45 & 91.44 & 62.99 & 69.43 \textbf{\deltapos{+5.62}} \\
\midrule
\multirow{3}{*}{TSV} & Full      & 80.98 & 67.94 & 50.95 & 59.10 & 64.65 & 75.82 & 90.76 & 53.37 & 67.95 \textbf{\deltazero} \\
& KnOTS & 82.51 & 65.91 & 43.66 & 48.98 & 60.52 & 71.47 & 89.22 & 51.29 & 64.20 \deltaneg{-3.75} \\
\ourrow
\cellcolor{white} & \textbf{Core}  & 80.46 & 69.51 & 51.44 & 58.33 & 61.03 & 76.70 & 89.37 & 52.43 & 67.41 \deltaneg{-0.54}\\
\midrule
\multirow{3}{*}{TIES \iso} & Full & 78.94 & 69.90 & 54.02 & 53.13 & 66.83 & 75.11 & 90.64 & 46.63 & 66.90 \deltazero \\
& KnOTS & 81.42 & 68.49 & 60.19 & 45.09 & 56.70 & 70.83 & 90.43 & 40.11 & 64.16 \deltaneg{-2.74} \\
\ourrow
\cellcolor{white} & \textbf{Core} & 79.32 & 75.84 & 58.66 & 64.09 & 80.14 & 76.76 & 90.52 & 55.13 & 72.56 \textbf{\deltapos{+5.66}}\\
\midrule
\multirow{3}{*}{DARE-TIES \iso} & Full & 80.33 & 70.76 & 60.98 & 51.48 & 61.58 & 74.16 & 90.41 & 47.26 & 67.12 \deltazero \\
& KnOTS & 82.11 & 67.71 & 58.92 & 41.44 & 57.92 & 69.69 & 90.37 & 38.79 & 63.37 \deltaneg{-3.75}\\
\ourrow
\cellcolor{white} & \textbf{Core} & 78.75 & 75.37 & 58.47 & 64.98 & 80.83 & 77.00 & 90.49 & 55.53 & 72.68 \textbf{\deltapos{+5.56}} \\
\midrule
\multirow{3}{*}{TSV \iso} & Full  & 76.46 & 73.65 & 61.02 & 56.41 & 69.02 & 74.04 & 90.01 & 49.12 & 68.72 \deltazero \\
& KnOTS & 78.88 & 72.79 & 66.59 & 54.61 & 78.83 & 72.21 & 89.03 & 50.28 & 70.40 \deltapos{+1.68}\\
\ourrow
\cellcolor{white} & \textbf{Core} & 79.30 & 78.26 & 63.94 & 58.11 & 77.28 & 74.40 & 89.85 & 50.97 & 71.51 \textbf{\deltapos{+2.79}}\\
\midrule
\multirow{3}{*}{Iso-C} & Full        & 78.48 & 75.61 & 63.37 & 63.05 & 77.14 & 77.10 & 90.22 & 51.50 & 72.06 \deltazero \\
& KnOTS  & 78.92 & 76.31 & 56.19 & 63.50 & 75.11 & 77.08 & 90.53 & 52.44 & 71.26 \deltaneg{-0.80}\\
\ourrow
\cellcolor{white} & \textbf{Core}    &  81.17 & 79.20 & 59.71 & 70.86 & 81.90 & 78.93 & 90.81 & 56.60 & 74.90 \textbf{\deltapos{+2.84}}\\
\bottomrule
\end{tabular}}\label{tab:per-task-vitb-mixed}
\end{table}

\subsection{Extension to VeRA}\label{appendix-sec:vera}

We also evaluated the applicability of Core Space merging beyond LoRA, specifically on VeRA~\cite{kopiczko2023vera} (\cref{tab:per-task-vitbr16-vera}). In VeRA, the decomposition $\Delta W = \Lambda_b B \Lambda_d A$ differs structurally from LoRA since $A$ and $B$ are fixed random matrices, and only the scaling vectors $\Lambda_b, \Lambda_d$ are trainable.

To apply our method, we absorb the scaling vectors into the low-rank matrices:
\begin{equation*}
\Tilde{B} = \Lambda_b B, \qquad \Tilde{A} = \Lambda_d A,
\end{equation*}
and then treat $(\Tilde{A}, \Tilde{B})$ as if they were standard LoRA components. Since the subsequent steps (stacking, SVD, projection, and alignment) are agnostic to how $A$ and $B$ were obtained, the derivations in \cref{appendix-sec:no-info-loss} apply without modification.

The empirical results in \cref{tab:per-task-vitbr16-vera} confirm this reasoning: Core Space merging consistently outperforms other approaches even in the VeRA setting, validating that the framework is general to low-rank adaptation methods beyond LoRA.

\begin{table}[t]
\centering
\caption{Normalized accuracies of merged models on the vision tasks with ViT-B/32 with VeRA rank 16.}
\resizebox{\textwidth}{!}{
\begin{tabular}{lL{1.5cm}ccccccccc}
\toprule
\textbf{Method} & \textbf{Space} & \textbf{Cars} & \textbf{DTD} & \textbf{EuroSAT} & \textbf{GTSRB} & \textbf{MNIST} & \textbf{RESISC} & \textbf{SUN397} & \textbf{SVHN} & \textbf{Avg ($\Delta$ Acc)} \\
\midrule
\textit{Abs. Accuracy} &  & 62.79 & 57.07 & 96.55 & 90.85 & 98.60 & 88.50 & 62.79 & 93.10 & - \\
\midrule
TA & - & 95.78 & 77.73 & 47.26 & 39.77 & 49.10 & 70.61 & 100.74 & 37.29 & 64.78 \deltazero \\
\midrule
\multirow{3}{*}{TIES} & Full &  95.53 & 77.73 & 44.76 & 39.07 & 48.25 & 69.92 & 100.58 & 35.62 & 63.93 \deltazero \\
& KnOTS   &  96.64 & 77.91 & 50.13 & 39.61 & 50.46 & 70.52 & 100.85 & 35.93 & 65.26 \deltapos{+1.33} \\
\ourrow
\cellcolor{white} & \textbf{Core}  & 97.06 & 77.73 & 44.30 & 41.98 & 50.35 & 71.63 & 100.44 & 38.95 & 65.31 \textbf{\deltapos{+1.39}} \\
\midrule
\multirow{3}{*}{DARE-TIES} & Full      & 94.44 & 77.73 & 47.26 & 42.09 & 49.82 & 69.37 & 100.05 & 38.28 & 64.88 \deltazero\\
& KnOTS &  96.52 & 77.54 & 52.13 & 40.88 & 50.46 & 70.25 & 100.06 & 37.24 & 65.63 \textbf{\deltapos{+0.75}}\\
\ourrow
\cellcolor{white} & \textbf{Core}  & 97.09 & 77.82 & 44.27 & 42.02 & 50.41 & 71.75 & 100.48 & 38.97 & 65.35 \deltapos{+0.47}\\
\midrule
\multirow{3}{*}{TSV}  & Full       &  93.35 & 77.54 & 45.11 & 37.44 & 53.50 & 69.40 & 99.70 & 33.85 & 63.74 \deltazero\\
& KnOTS   & 92.73 & 78.19 & 54.85 & 38.77 & 56.20 & 69.10 & 98.82 & 34.02 & 65.33 \deltapos{+1.59} \\
\ourrow
\cellcolor{white} & \textbf{Core}     &  93.75 & 76.51 & 60.15 & 37.94 & 54.32 & 68.38 & 98.77 & 34.63 & 65.56 \textbf{\deltapos{+1.82}}\\
\midrule
\multirow{3}{*}{TIES \iso} & Full &  94.79 & 77.26 & 46.84 & 36.11 & 48.78 & 68.19 & 100.81 & 34.03 & 63.35 \deltazero\\
& KnOTS &  94.81 & 77.26 & 47.33 & 35.49 & 48.67 & 68.10 & 100.67 & 33.74 & 63.26 \deltaneg{-0.09} \\
\ourrow
\cellcolor{white} & \textbf{Core} &  95.43 & 77.63 & 50.79 & 37.14 & 50.61 & 69.06 & 100.62 & 33.77 & 64.38 \textbf{\deltapos{+1,03}}\\
\midrule
\multirow{3}{*}{DARE-TIES \iso} & Full &  93.82 & 76.98 & 46.34 & 36.80 & 49.47 & 68.26 & 100.44 & 34.69 & 63.35 \deltazero\\
& KnOTS & 94.88 & 77.17 & 47.37 & 35.52 & 48.64 & 68.13 & 100.66 & 33.75 & 63.27 \deltaneg{-0.08}\\
\ourrow
\cellcolor{white} & \textbf{Core} & 94.14 & 77.26 & 53.01 & 37.45 & 52.90 & 68.13 & 100.09 & 31.80 & 64.35 \textbf{\deltapos{+1.00}}\\
\midrule
\multirow{3}{*}{TSV \iso} & Full  & 93.37 & 76.70 & 48.29 & 35.92 & 51.47 & 68.11 & 100.30 & 34.35 & 63.56 \deltazero\\
& KnOTS & 92.58 & 75.30 & 52.51 & 36.47 & 58.16 & 67.36 & 99.83 & 35.43 & 64.71 \deltapos{+1.15}\\
\ourrow
\cellcolor{white} & \textbf{Core} &  93.57 & 76.89 & 63.33 & 39.27 & 57.53 & 68.63 & 99.18 & 34.07 & 66.56 \textbf{\deltapos{+3.00}}\\
\midrule
\multirow{3}{*}{Iso-C} & Full        &  93.50 & 77.26 & 52.05 & 37.42 & 49.49 & 68.06 & 100.62 & 34.22 & 64.08 \deltazero\\
& KnOTS  &  94.59 & 77.35 & 47.76 & 35.52 & 48.73 & 68.27 & 100.52 & 33.68 & 63.30 \deltaneg{-0.78} \\
\ourrow
\cellcolor{white} & \textbf{Core}    &  91.77 & 75.77 & 63.60 & 38.53 & 58.53 & 67.81 & 97.67 & 36.50 & 66.27 \textbf{\deltapos{+2.19}}\\
\bottomrule
\end{tabular}}\label{tab:per-task-vitbr16-vera}
\end{table}

\end{document}